\documentclass{article}

\usepackage{microtype}
\usepackage{graphicx}
\usepackage{subfigure}
\usepackage{booktabs} % for professional tables
\usepackage{multirow}
\usepackage{amsmath}
\usepackage{booktabs}
\usepackage{graphicx}
\usepackage{svg}

\usepackage{hyperref}
\usepackage{adjustbox}
\usepackage[accepted]{icml2026}

\usepackage{amsmath}
\usepackage{amssymb}
\usepackage{mathtools}
\usepackage{amsthm}
\usepackage{amsmath}
\usepackage{amssymb}

\renewcommand{\algorithmicrequire}{\textbf{Input:}}
\renewcommand{\algorithmicensure}{\textbf{Output:}}
\newcommand{\TO}{\textbf{to }}

\usepackage{graphicx}
\usepackage[table]{xcolor}
\usepackage{colortbl}
\usepackage{float}
\usepackage{tabularx}
\usepackage[capitalize,noabbrev]{cleveref}

\theoremstyle{plain}
\newtheorem{theorem}{Theorem}[section]
\newtheorem{proposition}[theorem]{Proposition}
\newtheorem{lemma}[theorem]{Lemma}

\theoremstyle{definition}
\newtheorem{definition}[theorem]{Definition}

\theoremstyle{remark}

\usepackage[textsize=tiny]{todonotes}
\usepackage{tcolorbox}

\icmltitlerunning{D$^3$: Dynamic Directional Graph-Constrained Data Scheduling for LLM Training}
\definecolor{lightpurple}{rgb}{0.862, 0.918, 0.992}
\begin{document}

\twocolumn[
\icmltitle{D$^3$: Dynamic Directional Graph-Constrained Data Scheduling for LLM Training}

  % It is OKAY to include author information, even for blind submissions: the
  % style file will automatically remove it for you unless you've provided
  % the [accepted] option to the icml2026 package.

  % List of affiliations: The first argument should be a (short) identifier you
  % will use later to specify author affiliations Academic affiliations
  % should list Department, University, City, Region, Country Industry
  % affiliations should list Company, City, Region, Country

  % You can specify symbols, otherwise they are numbered in order. Ideally, you
  % should not use this facility. Affiliations will be numbered in order of
  % appearance and this is the preferred way.
  \icmlsetsymbol{corr}{*}
  \icmlsetsymbol{intern}{\textdagger}

  \begin{icmlauthorlist}
    \icmlauthor{Yuanjian Xu}{hkust,msra,intern}
    \icmlauthor{Jianing Hao}{hkust}
    \icmlauthor{Guang Zhang}{hkust,corr}
    \icmlauthor{Zhong Li}{msra,corr}
  \end{icmlauthorlist}

  \icmlaffiliation{hkust}{HKUST-GZ}
  \icmlaffiliation{msra}{Microsoft Research}

  \icmlcorrespondingauthor{Guang Zhang}{guangzhang@hkust-gz.edu.cn}
  \icmlcorrespondingauthor{Zhong Li}{zhongli@microsoft.com}

  % You may provide any keywords that you find helpful for describing your
  % paper; these are used to populate the "keywords" metadata in the PDF but
  % will not be shown in the document
  \icmlkeywords{Machine Learning, ICML}

  \vskip 0.3in
]

% this must go after the closing bracket ] following \twocolumn[ ...

% This command actually creates the footnote in the first column listing the
% affiliations and the copyright notice. The command takes one argument, which
% is text to display at the start of the footnote. The \icmlEqualContribution
% command is standard text for equal contribution. Remove it (just {}) if you
% do not need this facility.

% Use ONE of the following lines. DO NOT remove the command.
% If you have no special notice, KEEP empty braces:
\printAffiliationsAndNotice{\textsuperscript{*}Co-corresponding authors.\ \textsuperscript{\textdagger}Work done during Yuanjian Xu's internship at Microsoft Research Asia.\ }
% Or, if applicable, use the standard equal contribution text:
% \printAffiliationsAndNotice{\icmlEqualContribution}

\begin{abstract}
Training data plays a central role in large language models (LLMs) optimization, motivating extensive research on data scheduling strategies.
Most existing approaches concentrate on adjusting the overall data distribution but neglect the underlying interactions between samples during training. 
However, we argue that such interactions cannot be overlooked, as real-world data samples frequently exhibit directional influences on each other, making the training order crucial. 
Intuitively, we can prioritize train-units with greater influence to improves learning efficiency.
In this work, we propose $D^3$, a \textbf{D}ynamic, \textbf{D}irectional graph-constrained \textbf{D}ata scheduling framework. $D^3$ formulates the complex interactions among train-units as a dynamic influence graph, where edges represent loss-based dependencies. It then solves a constrained optimization problem over this graph to derive the training order, which ensures that the data sequence respects the evolving information flow throughout training.
Our approach is theoretically motivated and yields consistent improvements over existing data scheduling methods across both pre-training and post-training phases. Furthermore, for scalability, $D^3$ also employs an efficient approximation algorithm that keeps the additional computational overhead within a manageable range. For future research, the code is available at \url{https://github.com/xuyj233/D3}.
\end{abstract}
\section{Introduction}

High-quality training data remains the primary driver behind the empirical success of large language models (LLMs)~\citep{Hsieh2023,Muennighoff2023,Mohan2024,villalobos2024position}. 
Recent work has shown that intrinsic properties of training data, including data quality, distributional composition, and sample difficulty, shape the learning trajectory and downstream generalization performance of LLMs~\cite{Xu2023HardSample,Hoffmann2022Chinchilla,doremi}.
Consequently, a plethora of data-centric strategies, including selection, filtering, and reweighting, have been proposed to curate more effective training distributions~\citep{Sorscher2022Beyond,Xie2023DataSelection,Albalak2024Survey}.
Despite their disparate formulations, these methods share a common objective: to enhance model robustness and generalization by optimizing over a set of admissible data distributions $\mathcal{Q}$, rather than the raw empirical distribution~\cite{Sagawa2019GroupDRO}.

However, a key mechanism governing data utility, which involves the dynamic interactions of the train-unit (defined as inter-batch or inter-sample interactions, where the latter is a degenerate case of the former with a batch size of 1) interactions that evolve throughout training, has been largely overlooked. The efficacy of a train-unit depends not only on its intrinsic properties but also on when it is presented relative to others, as early train-units can prime the model for later ones, creating dependencies that render the data non-exchangeable. Recent empirical studies confirm that data ordering substantially influences convergence and final performance in LLM training~\cite{Kim2024StrategicOrdering,Pouransari2024DatasetDecomposition,Jia2025CurriculumEffects}. Yet these scheduling approaches remain largely heuristic; they lack a principled, optimization-grounded method to quantify the very interactions they seek to exploit. This leads to a core open question: how can we formally model dynamic train-unit interactions to derive optimal data schedules?

% From this perspective, a key limitation of prior work is the primary focus on refining global data distributions while overlooking the complex, intrinsic interactions between individual samples. 
% The existence of such interactions precludes the assumption of data exchangeability; instead, it dictates an inherently optimal training trajectory determined by the underlying dependencies between samples.
% Recent empirical studies challenge this assumption, demonstrating that data ordering can substantially influence both convergence behavior and final performance in LLM training~\cite{Kim2024StrategicOrdering,Pouransari2024DatasetDecomposition,Jia2025CurriculumEffects}.
% However, these approaches do not yet offer a principled understanding of why training order matters from an optimization dynamics perspective. 

To address this gap, we introduce $D^3$, a framework that reformulates data scheduling as a constrained optimization problem over a dynamic graph. We start from the principle that the optimal order should prioritize train-units that most effectively “prepare” the model for subsequent data. $D^3$ makes this precise by modeling pairwise influence via look-ahead loss reduction: if training sample $i$ now significantly lowers the future loss of train-unit $j$, then $i$ should precede $j$. By estimating these directional effects throughout training, $D^3$ maintains a dynamic influence graph that captures the evolving topology of train-unit dependencies. The training sequence is then derived as the solution that best respects this graph’s structure, ensuring the model follows a coherent, interaction-aware learning trajectory. Our contributions are summarized as follows:  
\begin{itemize}
    \item We propose $D^3$, a data scheduling framework that formalizes sequence ordering as an optimization problem over a dynamic training influence graph.
    \item By providing a fundamental method to characterize asymmetric train-unit interactions, we reveal that these relationships dictate the necessity of a structured training sequence, addressing a significant gap in conventional distribution-centric approaches.
    \item Extensive numerical validations across multiple pre-training and post-training tasks show that $D^3$ consistently improves the performance of LLMs with reasonable computation overheads. 
\end{itemize}
\section{Related Work}
\label{sec:related_work}

We organize related work into three categories: (i) sample-level data selection (reweighting) methods, (ii) domain-level reweighting approaches, and (iii) empirical studies that investigate the effects of data ordering during LLM training.

\subsection{Domain-Level Data Mixture Optimization}
Domain-level optimization determines the optimal composition of macroscopic data sources (e.g., web, code, academic text) for pretraining, moving beyond simple heuristics. Inspired by scaling laws \citep{kaplan2020scaling}, recent studies introduce automated, optimization-driven frameworks. A seminal approach is DoReMi \citep{doremi}, which uses a small proxy model to derive domain weights via distributionally robust optimization.
Subsequent methods enhance efficiency and scalability: REGMIX \citep{liu2024regmix} treats the problem as a regression task using micro-models, MixMin \citep{DBLP:conf/icml/ThudiRRTM25} formalizes mixture finding as a convex minimization problem. Notably, MixMin demonstrates that optimal mixtures found on small models can be scale-invariant.
To reduce retraining costs, Chameleon \citep{DBLP:conf/icml/XieTC25} utilizes leverage scores in a learned embedding space for flexible domain reweighting, and Velocitune \citep{DBLP:conf/acl/LuoZLLGCC25} adaptively adjusts weights based on learning velocity, specifically targeting the complexities of domain-adaptive continual pre-training.
In comprehensive empirical studies, \citet{DBLP:journals/corr/abs-2507-09404} propose scaling laws for optimal data mixtures, further establishing the theoretical foundation for these data-centric strategies.

\paragraph{Sample-level Data Strategies.}
Sample-level strategies prioritize individual training instances, offering finer-grained control than domain-level methods. 
A key recent advance is the shift toward dynamic, online selection mechanisms. 
GREATS \citep{DBLP:conf/nips/WangWSM024} applies greedy optimization via Taylor expansion to select high-quality batches in every iteration, while \citet{DBLP:conf/iclr/SowWB0JL25} introduces online loss-based reweighting to focus training on more informative samples. 
To capture the complex interactions between data points, Group-MATES \citep{yu2025grouplevel} proposes group-level selection using a relational data influence model. Another paradigm leverages training artifacts and optimization frameworks to refine data importance. 
AutoMixer \citep{DBLP:conf/acl/Chang0HVKSC25} utilizes checkpoint models to approximate data influence, whereas DWM \citep{DBLP:conf/icml/00560ZTH0T25} implements a bi-level optimization framework to update a Data Weighting Model dynamically. 
Furthermore, to resolve potential conflicts between heterogeneous selection criteria, Multi-Actor Collaboration frameworks \citep{DBLP:conf/acl/Bai0WSZPZWQZYH25} have been proposed to integrate multiple prioritization rules through a central console. 
However, most existing sample-level methods either incur heavy computational overhead due to second-order optimizations or lack a unified geometric principle to guide the data selection process, particularly during critical phases such as model annealing.

\subsection{Data Curriculum in LLM training}
Previous methods have empirically validated the importance of sample ordering in LLM training~\cite{Kim2024StrategicOrdering,Pouransari2024DatasetDecomposition,Jia2025CurriculumEffects}. Orca \cite{mukherjee2023orca} establishes a progressive learning paradigm that leverages complex explanation traces to enhance reasoning capabilities. 
Recent research on contrastive post-training shows that a curriculum transitioning from easy preference pairs to hard ones yields step-function improvements in model alignment~\cite{DBLP:conf/emnlp/ZhouAZIZSXZ24}. 
Despite these empirical successes, most curriculum strategies remain grounded in heuristic designs. There is still a lack of fundamental understanding regarding the underlying training dynamics, particularly how asymmetric gradient interactions between samples influence the optimization trajectory.
\section{Methodology}

In this section, we begin with a formal formulation of learning with data scheduling. In Section~\ref{sec:dynamic-conditioning}, we introduce the construction of an influence graph, which captures how loss curvatures induce asymmetric dependencies among training units. Subsequently, in Section~\ref{sec:influence_optimization}, we formulate the data scheduling task as a constrained optimization problem governed by the influence graph. Section~\ref{subsec:solver} presents an efficient solver for this problem. Finally, the implementation of $D^3$ is detailed in Section~\ref{sec:implementation}, which focuses on the practical approximations designed to ensure scalability and efficiency in large-scale training settings.

\subsection{Problem Formulation}
We formalize one epoch of LLM training as a discrete dynamical system over the parameter space $\Theta \subseteq \mathbb{R}^d$. 
Given a training dataset $\mathcal{D} = \{z_i\}_{i=1}^N$, we consider a fixed partition into $K$ disjoint batches $\mathcal{P} = \{\mathcal{B}_1, \dots, \mathcal{B}_K\}$ of equal size $B = N/K$ (assuming $N$ is divisible by $K$). 
The empirical risk on a batch $\mathcal{B} \in \mathcal{P}$ is defined as:
\(
    \ell(\mathcal{B}; \theta) \;=\; \frac{1}{B} \sum_{z \in \mathcal{B}} \ell(z; \theta).
\) A data schedule is specified by a permutation $\pi \in \mathcal{S}_K$, where $\mathcal{S}_K$ denotes the symmetric group of degree $K$. 
When $B = 1$ (hence $K = N$), the schedule operates at the sample level; for $B > 1$, it characterizes batch-level scheduling.
At each discrete step $t \in \{1, \dots, T\}$, where $T = K$ for one epoch, the model state is updated by a stochastic gradient operator $\mathcal{T}_{\pi(t)}: \Theta \to \Theta$:
\[
    \theta^{(t)} \;=\; \mathcal{T}_{\pi(t)}\bigl(\theta^{(t-1)}\bigr) 
    \;=\; \theta^{(t-1)} - \eta \, \nabla_{\!\theta}\, \ell\bigl(\mathcal{B}_{\pi(t)}; \theta^{(t-1)}\bigr),
\]
where $\eta > 0$ is the learning rate. 
The terminal state after processing all $K$ batches is the composition:
\[
    \theta^{(T)}(\pi) \;=\; \bigl(\mathcal{T}_{\pi(T)} \circ \mathcal{T}_{\pi(T-1)} \circ \cdots \circ \mathcal{T}_{\pi(1)}\bigr)(\theta^{(0)}).
\]
Because gradient operators are generally non-commutative in non-convex landscapes, the terminal state $\theta^{(T)}(\pi)$ is path-dependent: different permutations $\pi$ lead to different final parameters and hence different generalization performance. 
The data scheduling problem aims to find a permutation that minimizes the expected risk on a test distribution $\mathcal{D}_{\text{test}}$:
\[
    \pi^* \;=\; \arg\min_{\pi \in \mathcal{S}_K} 
    \mathbb{E}_{z \sim \mathcal{D}_{\text{test}}}\Bigl[\,\ell\bigl(z; \theta^{(T)}(\pi)\bigr)\Bigr].
\]
We emphasize that this is a combinatorial optimization over $K!$ possible schedules, which is intractable for large $K$.
Our work seeks efficient algorithms to approximate $\pi^*$ by exploiting the interaction between training batches. 
\begin{figure*}
    \centering
    \includegraphics[width=0.8\linewidth]{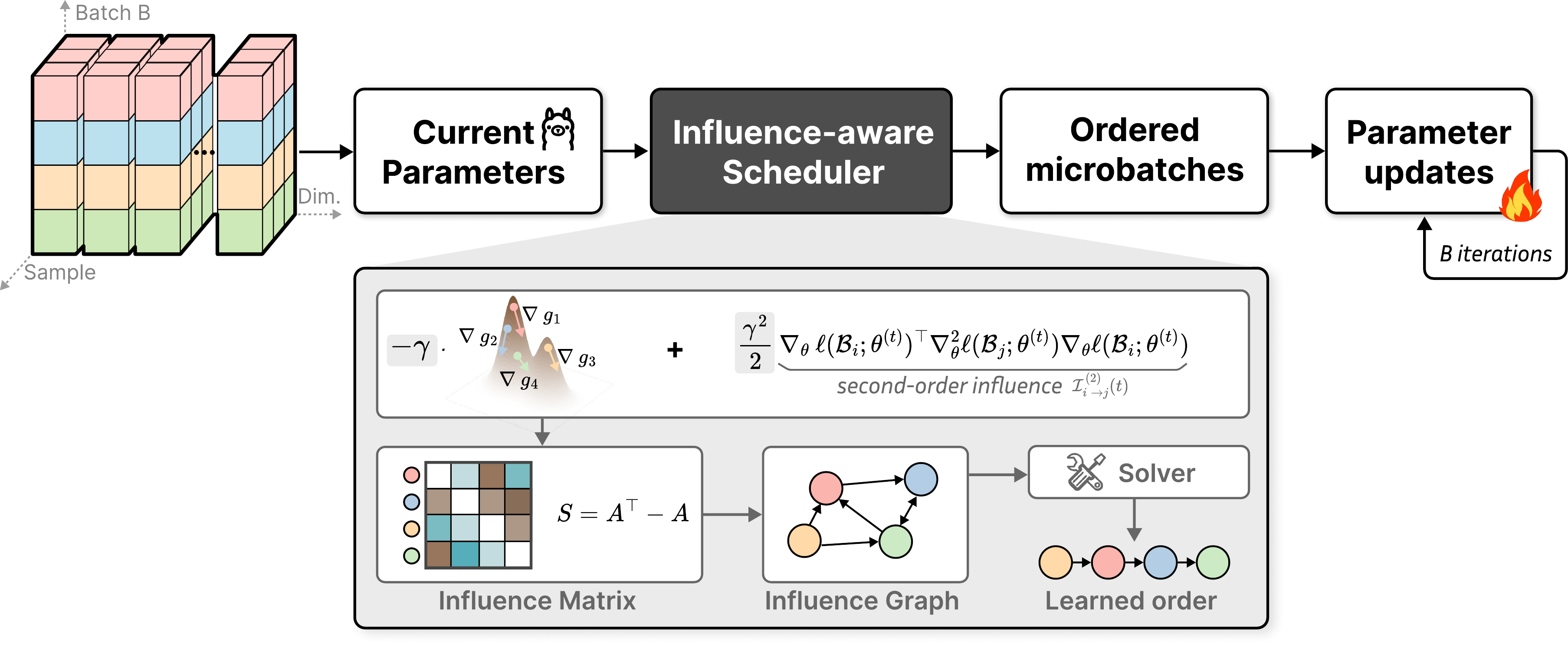}
    \caption{\textbf{Overview of the $D^3$ scheduling framework.} Given $B$ micro-batches and the current parameter state $\theta^{(t)}$, the scheduler evaluates pairwise interactions by computing batch gradients and second-order curvature terms. These interactions are formalized into a directional influence matrix $\mathbf{S}^{(t)}$, which induces a dynamic influence graph. A graph-based solver then optimizes the sequence over this graph to produce the learned training order $\pi^{(t)}$, which is applied during SGD updates to facilitate a more efficient optimization trajectory.}
    \label{fig:pipeline}
\end{figure*}
\subsection{The Construction of Influence Graph}
\label{sec:dynamic-conditioning}
To characterize the interactions among training batches during optimization, we introduce the concept of training influence prediction. This measure, formally defined below, predicts how the loss of one training unit would change if the model were updated using the gradient of another one.
\begin{tcolorbox}[colback=lightpurple, colframe=lightpurple, sharp corners=all, boxrule=0mm, boxsep=0.5mm, left=1.5mm, right=1.5mm, top=1.5mm, bottom=1.5mm]
\begin{definition}[Training Influence Prediction]
\label{def:training_influence}
Let $\theta^{(t)}$ be the model parameters at optimization step $t$. Consider a hypothetical update using the gradient from batch $\mathcal{B}_i$:
\[
\theta_i^{(t+1)} \;:=\; \theta^{(t)} - \gamma \, \nabla_{\!\theta}\, \ell(\mathcal{B}_i; \theta^{(t)}),
\]
where $\gamma > 0$ is a look-ahead parameter that controls the step size for influence estimation. 
The training influence of batch $\mathcal{B}_i$ on batch $\mathcal{B}_j$ at step $t$ is defined as:
\[
\mathcal{I}_{i \to j}(t) \;:=\; \ell\bigl(\mathcal{B}_j; \theta_i^{(t+1)}\bigr) - \ell\bigl(\mathcal{B}_j; \theta^{(t)}\bigr).
\]
\end{definition}
\end{tcolorbox}
By varying \(\gamma\), this formulation allows us to probe interactions at different temporal scales: small values of \(\gamma\) reflect short-term and local effects, while larger values of \(\gamma\) approximate accumulated and long-term influences.

Intuitively, more negative influence values correspond to greater loss decrement on $\mathcal{B}_j;$ resulted from the gradient descent on $\mathcal{B}_i$. 
We have the following second-order approximation of the training influence in Proposition~\ref{prop:influence_properties}, 
which shows that training influence is symmetric to the first order but becomes asymmetric when curvature effects are considered.

\begin{tcolorbox}[colback=lightpurple, colframe=lightpurple, sharp corners=all, boxrule=0mm, boxsep=0.5mm, left=1.5mm, right=1.5mm, top=1.5mm, bottom=1.5mm]
\begin{proposition}[Local Structure of Influence Prediction]
\label{prop:influence_properties}
Assume that $\ell(\cdot;\theta)$ is twice differentiable. 
A second-order Taylor expansion gives
{\small
\begin{equation*}
    \begin{aligned} \mathcal{I}_{i \to j}(t) &= -\gamma \, \underbrace{\nabla_\theta \ell(\mathcal{B}_j;\theta^{(t)})^\top \nabla_\theta \ell(\mathcal{B}_i;\theta^{(t)})}_{\mathcal{I}_{i \to j}^{(1)}(t)} \\ &\quad + \frac{\gamma^2}{2} \underbrace{\nabla_\theta \ell(\mathcal{B}_i;\theta^{(t)})^\top \nabla_\theta^2 \ell(\mathcal{B}_j;\theta^{(t)}) \nabla_\theta \ell(\mathcal{B}_i;\theta^{(t)})}_{\mathcal{I}_{i \to j}^{(2)}(t) }\\ &\quad+ o(\gamma^2). \end{aligned}
\end{equation*}
}
\end{proposition}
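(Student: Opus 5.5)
The idea is to study the training influence as a scalar function of the look-ahead step $\gamma$ and Taylor-expand it at $\gamma=0$. Fix $t$ and write $g_i := \nabla_\theta \ell(\mathcal{B}_i;\theta^{(t)})$, $g_j := \nabla_\theta \ell(\mathcal{B}_j;\theta^{(t)})$ and $H_j := \nabla_\theta^2 \ell(\mathcal{B}_j;\theta^{(t)})$. Define
\[
\phi(s) := \ell\bigl(\mathcal{B}_j;\theta^{(t)} - s\, g_i\bigr), \qquad s \ge 0 .
\]
By Definition~\ref{def:training_influence}, $\mathcal{I}_{i\to j}(t) = \phi(\gamma) - \phi(0)$. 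The direction $g_i$ is fixed, so $\phi$ is a one-variable function along a ray.

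The key steps are as follows.

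\textbf{Regularity of $\phi$.} Each $\ell(z;\cdot)$ is twice differentiable. The batch loss $\ell(\mathcal{B}_j;\cdot)$ is a finite average of these, so it is twice differentiable too. Composing with the affine map $s \mapsto \theta^{(t)} - s g_i$ makes $\phi$ twice differentiable at $0$.

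\textbf{Derivatives by the chain rule.}
\[
\phi'(s) = -\nabla_\theta \ell(\mathcal{B}_j;\theta^{(t)} - s g_i)^\top g_i ,
\]
so $\phi'(0) = -g_j^\top g_i$. Differentiating once more gives $\phi''(0) = g_i^\top H_j g_i$.

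\textbf{Peano-form Taylor expansion.} Taylor's theorem with Peano remainder gives
\[
\phi(\gamma) = \phi(0) + \gamma\,\phi'(0) + \tfrac{\gamma^2}{2}\,\phi''(0) + o(\gamma^2) \quad (\gamma\to 0).
\]
This only needs twice differentiability at the point, not continuity of the Hessian. Substituting the derivatives yields exactly the displayed expansion, with $\mathcal{I}^{(1)}_{i\to j}(t) = g_j^\top g_i$ and $\mathcal{I}^{(2)}_{i\to j}(t) = g_i^\top H_j g_i$.

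\textbf{Symmetry and asymmetry.} The first-order term is symmetric in $(i,j)$ because the inner product is. The second-order term generally is not: it uses $H_j$ paired with $g_i$, while the reversed influence uses $H_i$ paired with $g_j$. This is the asymmetry the proposition refers to.

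\textbf{Main obstacle.} The only real subtlety is stating the remainder honestly. $o(\gamma^2)$ is meant as $\gamma \to 0$ with $\theta^{(t)}$, $i$ and $j$ fixed, and the assumption is only twice differentiability. So I would use the Peano remainder, for which twice differentiability at a point suffices, rather than the Lagrange or integral form, which needs a $C^2$ hypothesis. This also answers any concern about differentiability of $\phi''$ along the whole segment. If a uniform bound were desired, I would add a Lipschitz-Hessian assumption, which gives an $O(\gamma^3\|g_i\|^3)$ remainder; the statement as written does not require it.
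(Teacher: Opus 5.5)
Your proof is correct and takes the same route as the paper, which justifies the proposition only by citing a second-order Taylor expansion of $\ell(\mathcal{B}_j;\cdot)$ at $\theta^{(t)}$ along the step $-\gamma\,\nabla_\theta \ell(\mathcal{B}_i;\theta^{(t)})$. Your reduction to the one-variable function $\phi$, with the chain-rule values of $\phi'(0)$ and $\phi''(0)$, fills in that step, and using the Peano remainder (which needs only twice differentiability at the point) correctly matches the stated hypothesis, a detail the paper leaves implicit.
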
 
\end{tcolorbox}
Obviously, the first-order influence term is pairwise symmetric: 
\(
\mathcal{I}_{i \to j}^{(1)}(t) = \mathcal{I}_{j \to i}^{(1)}(t),
\)
while the second-order influence term is asymmetric: 
\(
\mathcal{I}_{i \to j}^{(2)}(t) \ne \mathcal{I}_{j \to i}^{(2)}(t),
\)
due to the per-batch Hessian $\nabla_\theta^2 \ell(\mathcal{B}_j;\theta^{(t)})$. 

At the same time, since $\mathcal{I}_{i \to j}(t)$ depends on the current parameter state $\theta^{(t)}$, the above interaction evolves throughout optimization, naturally inducing a dynamic and directed relation among training samples, 
which can be further characterized as a dynamic directed graph structure.

% As shown in Definition~\ref{def:dynamic_influence_graph}, 

\begin{tcolorbox}[colback=lightpurple, colframe=lightpurple, sharp corners=all, boxrule=0mm, boxsep=0.5mm, left=1.5mm, right=1.5mm, top=1.5mm, bottom=1.5mm]
\begin{definition}[Influence Graph]
\label{def:dynamic_influence_graph}
At step $t$, the influence graph is a
directed weighted graph $\mathcal{G}^{(t)} = (\mathcal{V}, \mathcal{E}^{(t)})$,
where each node $i \in \mathcal{V}$ corresponds to a training batch $\mathcal{B}_i$.
A directed edge $(i,j) \in \mathcal{E}^{(t)}$ exists if
$\mathcal{I}_{i \to j}(t) \neq 0$, with weight
$\mathcal{I}_{i \to j}(t)$.
The graph evolves over time through its dependence on $\theta^{(t)}$.
Equivalently, $\mathcal{G}^{(t)}$ can be encoded by an adjacency matrix
$\mathbf{A}^{(t)} \in \mathbb{R}^{K \times K}$ with entries
\(
\mathbf{A}^{(t)}_{ij} = \mathcal{I}_{i \to j}(t).
\)
\end{definition}
\end{tcolorbox}
\subsection{Influence Graph-Constrained Optimization}
\label{sec:influence_optimization}
The influence graph $\mathcal{G}^{(t)}$ (Definition~\ref{def:dynamic_influence_graph}) captures asymmetric dependencies among batches, but it does not directly prescribe an ordering for sequential training. To determine an optimal sequence, we transform the graph into a linear chain that respects strong directional preferences while resolving conflicts from cycles. We formalize these preferences using a directional advantage matrix that quantifies the relative benefit of training one batch before another.

\begin{tcolorbox}[colback=lightpurple, colframe=lightpurple, sharp corners=all, boxrule=0mm, boxsep=0.5mm, left=1.5mm, right=1.5mm, top=1.5mm, bottom=1.5mm]
\begin{definition}[Influence Advantage Graph]
\label{def:influence_advantage_graph}
Given the influence graph $\mathcal{G}^{(t)}$ with adjacency matrix $\mathbf{A}^{(t)}$, 
we define the influence advantage graph as the same structure augmented with a directional advantage matrix 
\[
\mathbf{S}^{(t)} := \mathbf{A}^{(t)\top} - \mathbf{A}^{(t)},
\]
where $\mathbf{S}^{(t)}_{ij} = \mathcal{I}_{j \to i}(t) - \mathcal{I}_{i \to j}(t)$ quantifies the advantage of training $\mathcal{B}_i$ before $\mathcal{B}_j$.
\end{definition}
\end{tcolorbox}

An edge $(i,j)$ with positive directional advantage $\mathbf{S}^{(t)}_{ij} > 0$ corresponds to $\mathcal{I}_{i \to j}(t) < \mathcal{I}_{j \to i}(t)$,
indicating that training batch $\mathcal{B}_i$ before $\mathcal{B}_j$ yields greater cumulative loss improvement than the reverse order.
The set of dominance constraints is therefore
{
\setlength{\abovedisplayskip}{8pt}
\setlength{\belowdisplayskip}{8pt}
\[
\mathcal{E}^{(t)}_{\mathrm{dom}} := \{(i,j) \mid \mathbf{S}^{(t)}_{ij} > 0\} = \{(i,j) \mid \mathcal{I}_{i \to j}(t) < \mathcal{I}_{j \to i}(t)\}.
\]
}

Transforming the fully connected influence advantage graph into a linear chain incurs a cost. Each edge with positive directional advantage that is contradicted by the ordering contributes a penalty. 

Formally, let $\pi^{(t)} \in \mathcal{S}_K$ be a batch ordering, where $\pi^{(t)}(s)=k$ indicates that batch $\mathcal{B}_k$ is placed at position $s$. 
Define its inverse mapping $\sigma^{(t)}: \{1,\dots,K\} \to \{1,\dots,K\}$ by $\sigma^{(t)}(k)=s$ whenever $\pi^{(t)}(s)=k$; then $\sigma^{(t)}(k)$ gives the position of batch $\mathcal{B}_k$.
Both $\pi^{(t)}$ and $\sigma^{(t)}$ are bijections and belong to $\mathcal{S}_K$, representing the same ordering. Using $\sigma^{(t)}$ as the decision variable, the scheduling problem minimizes the total penalty from violated directional advantages:
{
\setlength{\abovedisplayskip}{8pt}
\setlength{\belowdisplayskip}{8pt}
\[
\sigma^{(t)\star} = \arg\min_{\sigma \in \mathcal{S}_K} \sum_{i \ne j} \max\!\bigl(0, \,\mathbf{S}^{(t)}_{ij}\bigr) \cdot \mathbb{I}\bigl[\sigma(j) < \sigma(i)\bigr].
\]
}
If $\mathbf{S}^{(t)}_{ij} > 0$ (i.e., edge $(i,j)$ favors $i$ before $j$) but $\sigma(j) < \sigma(i)$ in the ordering, the edge is violated and incurs cost $\mathbf{S}^{(t)}_{ij}$.
When the graph is acyclic, there exists an ordering with zero cost, corresponding to a topological sort of the graph; otherwise, cycles force trade-offs, and the optimal ordering minimizes the total violation penalty.

\subsection{Solving the Scheduling Optimization}
\label{subsec:solver}

We propose a lightweight solver, Random-Swap Refinemen (RSR), to approximately minimize the violation cost. 
The solver operates in two stages: first, it computes an initial ordering based on the advantage matrix row sums; second, it refines this ordering through random pairwise swaps.

\paragraph{Initial ordering via row sums.}
For each batch $i$, compute the row sum $r_i = \sum_{j=1}^K \mathbf{S}^{(t)}_{ij}$. 
Since a positive entry $\mathbf{S}^{(t)}_{ij}$ indicates that placing $i$ before $j$ is beneficial, a larger $r_i$ suggests that batch $i$ has a global advantage when scheduled earlier. 
We therefore sort the batches in decreasing order of $r_i$ to obtain an initial permutation $\pi_0$.

\paragraph{Random-swap refinement.}
Starting from $\pi_0$, we perform up to $M$ random pairwise swap trials. 
In each trial, we randomly choose two distinct positions $p$ and $q$ (with $p<q$) and consider swapping the batches at these positions. 
Let $i = \pi(p)$ and $j = \pi(q)$. 
The change in violation cost $\Delta\mathcal{C}$ due to swapping $i$ and $j$ can be computed efficiently by observing that only the relative order of $i$ and $j$ with respect to each other and to the batches between them changes. 
Specifically, let $I(p,q) = \{\pi(k) \mid p < k < q\}$ be the set of batches between positions $p$ and $q$. 
Then:
{
\setlength{\abovedisplayskip}{8pt}
\setlength{\belowdisplayskip}{8pt}
\[
\Delta\mathcal{C} = \mathbf{S}^{(t)}_{ij} + \sum_{k \in I(p,q)} \bigl(\mathbf{S}^{(t)}_{ik} + \mathbf{S}^{(t)}_{kj}\bigr).
\]
If $\Delta\mathcal{C} < 0$, the swap is accepted and $\pi$ is updated accordingly.
}

\paragraph{Complexity analysis.}
While this random‑swap heuristic does not guarantee global optimality, it provides a computationally efficient method to construct a high‑quality ordering that respects most directional advantages. The initial ordering requires computing all row sums and sorting, which takes $O(K^2)$ time. Each swap trial evaluates $\Delta\mathcal{C}$ in $O(|I(p,q)|)$ time, bounded by $O(K)$ in the worst case. Hence, the total time for $M$ trials is $O(MK)$, and the overall complexity of the solver is $O(K^2 + MK)$.

\subsection{The Detailed Implementation of $D^3$}\label{sec:implementation}

Efficient training influence scheduling for LLMs must address two scaling issues: a large sample count $N$ and high-dimensional gradients $\mathbb{R}^d$ (with $d \sim 10^9$–$10^{12}$). Computing pairwise influences and solving the exact ordering problem is infeasible at this scale. We therefore introduce two approximations: chunk-wise internal reordering and gradient compression for influence propagation. 

\paragraph{Chunk-wise Internal Reordering.}
To scale to large $K$, we adopt a stochastic chunking strategy. At each step $t$, we randomly sample $L \ll K$ batches to form a chunk $\mathcal{C}^{(t)}$. The two-stage solver is then applied within $\mathcal{C}^{(t)}$ to reorder these $L$ batches based on their $L \times L$ advantage sub-matrix. Batches outside the chunk keep their relative order. This reduces per-step complexity from $O(K^2)$ to $O(L^2 + ML)$ and ensures all batches are gradually refined over time through repeated sampling.

\paragraph{Gradient information compression.}
To handle extreme gradient dimensionality, we project each batch gradient into a low-dimensional manifold using a fixed Gaussian random matrix $\mathbf{R} \in \mathbb{R}^{d \times k}$ ($k \ll d$). This Johnson–Lindenstrauss projection approximately preserves inner products with high probability~\cite{DBLP:conf/pods/Achlioptas01}. For the gradient of batch $\mathcal{B}_i$, denoted as $\mathbf{g}^{(t)}_i = \nabla_\theta \ell(\mathcal{B}_i; \theta^{(t)})$, we compute its low-dimensional sketch $\tilde{\mathbf{g}}^{(t)}_i = \mathbf{R}^\top \mathbf{g}^{(t)}_i \in \mathbb{R}^k$. The first-order influence term can then be efficiently approximated as:
{
\setlength{\abovedisplayskip}{8pt}
\setlength{\belowdisplayskip}{8pt}
\[
\mathcal{I}^{(1)}_{i \to j}(t) = \bigl(\mathbf{g}^{(t)}_j\bigr)^\top \mathbf{g}^{(t)}_i \;\approx\; \bigl(\tilde{\mathbf{g}}^{(t)}_j\bigr)^\top \tilde{\mathbf{g}}^{(t)}_i.
\]
}
This compressed inner product allows us to compute the advantage matrix entries $\mathbf{S}^{(t)}_{ij}$ with $O(dk + k)$ memory and time per pair, instead of $O(d)$.

\paragraph{Hessian approximation.}
For the second-order term, we avoid the full Hessian by adopting a diagonal approximation via Hutchinson estimation~\cite{DBLP:conf/iclr/Liu0HL024}. Using $W$ Hutchinson vectors $\mathbf{u}^{(w)} \sim \mathcal{N}(0, \mathbf{I}_d)$, we estimate the diagonal Hessian for batch $\mathcal{B}_j$ as
{
\setlength{\abovedisplayskip}{8pt}
\setlength{\belowdisplayskip}{8pt}
\[
\hat{\mathbf{h}}^{(t)}_j = \frac{1}{W} \sum_{w=1}^{W} \mathbf{u}^{(w)} \odot \bigl( \nabla^2_\theta \ell(\mathcal{B}_j; \theta^{(t)}) \, \mathbf{u}^{(w)} \bigr),
\]
}
where $\odot$ denotes the Hadamard product. To avoid storing the $d$-dimensional vector $\hat{\mathbf{h}}^{(t)}_j$ for every batch, we project the curvature onto the gradient direction, obtaining a scalar measure:
\[
\lambda_j^{(t)} = \bigl(\mathbf{g}^{(t)}_j\bigr)^\top \operatorname{diag}\bigl(\hat{\mathbf{h}}^{(t)}_j\bigr) \, \mathbf{g}^{(t)}_j.
\]
We then approximate the second-order influence term as $\mathcal{I}^{(2)}_{i \to j}(t) \;\approx\; \frac{\gamma^2}{2} \, \lambda_j^{(t)}$. Combining with the compressed first-order term yields the final efficient influence estimate:
{
\setlength{\abovedisplayskip}{8pt}
\setlength{\belowdisplayskip}{8pt}
\[
\mathcal{I}_{i \to j}(t) \;\approx\; - \gamma \, \bigl(\tilde{\mathbf{g}}^{(t)}_j\bigr)^\top \tilde{\mathbf{g}}^{(t)}_i \;+\; \frac{\gamma^2}{2} \, \lambda_j^{(t)}.
\]
}
In practice, we use a small $W = 5$ which provides a good efficiency-accuracy trade-off.

\paragraph{The Overall Pipeline.} Based on the above approximations, as shown in Algorithm~\ref{alg:d3} and Figure~\ref{fig:pipeline}, $D^3$ operates through four sequential phases at each training step. We begin by sampling candidate micro-batches and computing their low-dimensional gradient representations. These representations are then used to construct a preference matrix based on directional influence scores between batches. The matrix is subsequently fed to the RSR solver to determine the optimal training order. Finally, model parameters are updated sequentially.

\begin{algorithm}
\caption{The Overall Pipeline of $D^3$}
\label{alg:d3}
\begin{algorithmic}[1]
\REQUIRE Parameters $\theta^{(0)}$, data loader $\mathcal{D}$, learning rate $\eta$, look-ahead parameter $\gamma$, chunk size $L$, projection dim $k$
\ENSURE Optimized parameters $\theta^{(T)}$

\STATE Initialize random projection matrix $\mathbf{R} \in \mathbb{R}^{d \times k}$

\FOR{step $t = 0$ \textbf{to} $T-1$}
    \item[] \textit{// Phase 1: Compute compressed gradient information}
    \STATE Sample $L$ batches $\{\mathcal{B}_\ell\}_{\ell=1}^L$ from $\mathcal{D}$
\STATE $\{\tilde{g}_\ell, \lambda_\ell\}_{\ell=1}^L \gets 
      \textsc{Sketch}($
\STATE $\qquad \{\mathcal{B}_\ell\}, \theta^{(t)}, \mathbf{R})$
    \item[] \textit{// Phase 2: Build influence advantage matrix}
    \STATE $\mathbf{S} \gets \textsc{Advantage}(\{\tilde{g}_\ell, \lambda_\ell\}, \gamma)$
    
    \item[] \textit{// Phase 3: Optimize ordering}
    \STATE $\pi \gets \textsc{RSR}(\mathbf{S})$
    
    \item[] \textit{// Phase 4: Sequential update}
    \STATE $\theta^{(t+1)} \gets \textsc{SGDUpdate}(\theta^{(t)}, \{\mathcal{B}_{\pi(r)}\}_{r=1}^L, \eta)$
\ENDFOR

\STATE \textbf{return} $\theta^{(T)}$
\end{algorithmic}
\end{algorithm}

\section{Experiments}
In this section, we first describe the benchmarks and baselines to ensure reproducibility and fair comparison. The following subsections then present analyses of pretraining and post-training performance, component ablation studies, the effects of key hyperparameters, and computational efficiency. Extended experimental details and additional results are provided in Appendices~\ref{sec:exp_setting} and~\ref{sec:app-more-results}, respectively.
\begin{table*}[htbp]
\centering
\caption{Pre-training performance on SlimPajama. All models utilize the GPT-2 Medium (355M) architecture and are trained on 100B tokens. The ``Aggregated Avg.'' is the arithmetic mean of the few-shot commonsense reasoning benchmarks shown below (HellaSwag, PIQA, OBQA, COPA, and WinoGrande). Subscripts denote the relative improvement or degradation compared to the strongest baseline. Best results are highlighted in bold. Detailed dataset statistics and model configurations are provided in Appendix ~\ref{sec:exp_setting}, while extended results on larger-scale pre-training are discussed in Appendix~\ref{sec:app-more-results}.}
\label{tab:pretraining_results}
\vspace{-1mm} 
\resizebox{0.9\textwidth}{!}{
\begin{tabular}{llc ccccc c}
\toprule
\multirow{2}{*}{\textbf{Granularity}} & \multirow{2}{*}{\textbf{Method}} & \textbf{Opt.} & \multicolumn{5}{c}{\textbf{Common Sense Reasoning}} & \textbf{Aggregated} \\ \cmidrule(lr){3-3} \cmidrule(lr){4-8} \cmidrule(l){9-9} 
 & & \textbf{PPL} $\downarrow$ & \textbf{Hella.} $\uparrow$ & \textbf{PIQA} $\uparrow$ & \textbf{OBQA} $\uparrow$ & \textbf{COPA} $\uparrow$ & \textbf{Wino.} $\uparrow$ & \textbf{Avg.} $\uparrow$ \\ \midrule

% --- Sample-level Methods ---
\multirow{2}{*}{Sample-level} 
 & Uniform Sampling & 3.13 & 26.1 & 55.5 & 11.7 & 58.0 & 49.9 & 40.24 \\
 & Dynamic Loss  & 3.10 & 26.6 & \textbf{56.8} & 13.8 & 59.0 & 50.1 & 41.26 \\ \midrule

% --- Domain-level Methods ---
\multirow{4}{*}{Domain-level} 
 & RegMix  & 4.51 & 26.1 & 55.6 & 13.2 & 60.0 & 50.0 & 40.98 \\
 & Data Mixing Law  & 4.50 & 26.5 & 54.5 & 13.0 & 62.0 & 49.1 & 41.02 \\
 & DoReMi  & 3.30 & 26.4 & 55.7 & 12.2 & 59.0 & 49.9 & 40.64 \\
 & DoGE  & 3.31 & 26.2 & 55.8 & 11.5 & 62.0 & 50.4 & 41.18 \\ \midrule \addlinespace[2pt]

% --- Our Method ---
\rowcolor[HTML]{F3F3F3} 
\textbf{Ours} & \textbf{D$^3$} & \textbf{2.97} \scriptsize{\textcolor{red}{-4.2\%}} & \textbf{27.3} \scriptsize{\textcolor{red}{+2.6\%}} & 55.7 \scriptsize{\textcolor{green!60!black}{-1.9\%}} & \textbf{15.2} \scriptsize{\textcolor{red}{+10.1\%}} & \textbf{63.5} \scriptsize{\textcolor{red}{+7.6\%}} & \textbf{51.1} \scriptsize{\textcolor{red}{+2.0\%}} & \textbf{42.56} \scriptsize{\textcolor{red}{+3.1\%}} \\ \bottomrule
\end{tabular}
}
\end{table*}
\subsection{Experimental Settings}
\paragraph{Benchmarks.}
We assess pre-training performance by measuring language modeling quality, with perplexity (PPL) as the primary metric. Following prior works~\cite{liu2024regmix,rethinking_2026}, we also benchmark on HellaSwag~\cite{zellers2019hellaswag}, PIQA~\cite{bisk2020piqa}, OBQA~\cite{mihaylov2018openbookqa}, COPA~\cite{sarlin2020copa}, and WinoGrande~\cite{sakaguchi2021winogrande} for commonsense reasoning. In post-training, we assess reasoning on more complex tasks, using GSM8K~\cite{cobbe2021gsm8k} and MATH~\cite{hendrycks2021math} for math reasoning, HumanEval~\cite{chen2021evaluatinglargelanguagemodels} and MBPP for coding.
\paragraph{Baselines.}
We validate our method across both pre-training and post-training stages, specifically focusing on supervised fine-tuning (SFT). For pre-training, we compare Uniform Sampling and Dynamic Loss~\cite{sow2025dynamic} with representative domain reweighting baselines RegMix~\cite{liu2024regmix}, Data Mixing Law~\cite{ye2024mixinglaws}, DoReMi~\cite{doremi} and DoGE~\cite{fan2023doge}. The experiments follow the standard scaling laws by training GPT-2 architecture~\cite{kaplan2020scaling} on subsets of SlimPajama~\cite{soboleva2023slimpajama}, and training LLaMA-based models~\cite{grattafiori2024llama} on the Pile~\cite{gao2021pile}. In the post-training phase, following previous work~\cite{DBLP:conf/icml/Liang00LWCH25}, we select baselines across three key dimensions: (i) curriculum ordering, comparing easy-to-hard versus hard-to-easy sequences categorized by initial loss values; (ii) multi-domain joint tuning, which follows a progressive curriculum learning taxonomy~\cite{soviany2022curriculum}; and (iii) single-domain specialization limits (DSL), which establish performance ceilings for code, math, and general instruction following.
The training set encompasses CodeAlpaca~\cite{codealpaca_2023}, GSM8K-RFT~\cite{gsm8k_2021}, and Alpaca-GPT4~\cite{alpaca_2023}.

\subsection{Main Results}

\paragraph{Pre-training Performance.} 
As summarized in Table~\ref{tab:pretraining_results}, $D^3$ outperforms all sample-level and domain-level baselines in both overall language modeling quality and aggregated reasoning performance. It achieves a 4.2\% relative reduction in perplexity over the strongest baseline and obtains the highest scores in four out of five evaluation benchmarks. While showing a slight degradation on PIQA compared to the dynamic loss baseline, $D^3$ delivers substantial gains on complex reasoning tasks, notably OBQA (+10.1\%) and COPA (+2.4\% over the strongest domain baseline). These results demonstrate two core advantages of our approach. First, $D^3$ effectively addresses the granularity limitation of domain-level reweighting methods such as DoReMi and RegMix, which assign uniform utility weights to all samples within a domain. By implementing batch-level scheduling, $D^3$ captures fine-grained, intra-domain utility diversity without incurring the prohibitive computational cost of per-sample selection. Second, the pronounced improvements on OBQA and COPA suggest that our dynamic graph-constrained scheduling shapes beneficial data trajectories. Unlike static mixing approaches, $D^3$ navigates the data manifold under structural constraints, potentially prioritizing foundational knowledge transitions that enhance performance on later complex reasoning tasks.

\begin{table*}[htbp]
\centering
\caption{Performance comparison of various data scheduling strategies during SFT on Llama-3-8B. Subscripts denote the relative improvement compared to the strongest baseline across all categories. Best results are highlighted in bold.}
\label{tab:sft_results}
\resizebox{0.75\textwidth}{!}{
\begin{tabular}{l l cc cc c}
\toprule
\multirow{2}{*}{\textbf{Paradigm}} & \multirow{2}{*}{\textbf{Method}}
& \multicolumn{2}{c}{\textbf{Reasoning}}
& \multicolumn{2}{c}{\textbf{Coding}}
& \textbf{Aggregated} \\
\cmidrule(lr){3-4} \cmidrule(lr){5-6} \cmidrule(l){7-7}
& & GSM8K $\uparrow$ & MATH $\uparrow$ & HumanEval $\uparrow$ & MBPP $\uparrow$ & Avg. $\uparrow$ \\
\midrule
\multirow{2}{*}{Curriculum}
& Loss low$\to$high & 64.3 & 24.1 & 45.8 & 50.5 & 46.2 \\
& Loss high$\to$low & 43.7 & 15.2 & 38.4 & 42.1 & 34.9 \\
\midrule
\multirow{2}{*}{Training}
& DSL (Baseline)    & 63.8 & 24.6 & 46.5 & 51.2 & 46.5 \\
& MTL               & 62.2 & 23.5 & 44.2 & 49.8 & 44.9 \\
\midrule \addlinespace[2pt]
\rowcolor[HTML]{F3F3F3}
\textbf{Ours} & \textbf{D$^3$}
& \textbf{66.3} \scriptsize{\textcolor{red}{+3.1\%}} & \textbf{27.4} \scriptsize{\textcolor{red}{+11.4\%}} & \textbf{50.2} \scriptsize{\textcolor{red}{+8.0\%}} & \textbf{54.6} \scriptsize{\textcolor{red}{+6.6\%}} & \textbf{49.6} \scriptsize{\textcolor{red}{+6.7\%}} \\
\bottomrule
\end{tabular}
}
\end{table*}
\paragraph{Post-trainig Performance.} 
The SFT results in Table~\ref{tab:sft_results} reveal several key insights into effective data scheduling.
First, they challenge the notion of a static optimal data order: both fixed curriculum strategies (loss low$\to$high and high$\to$low) underperform our adaptive approach, confirming that dynamic sequencing based on real-time model state is more critical than predefined difficulty levels.
Second, the pronounced gains on structured tasks, notably MATH (+11.4\%) and HumanEval (+8.0\%) suggest that $D^3$'s graph-constrained scheduling effectively guides the model to acquire formal reasoning and coding patterns.
Third, $D^3$'s superiority over standard MTL demonstrates that explicitly managing inter-task interference through fine-grained batch scheduling yields better coordination than naive data mixing. Importantly, $D^3$ achieves this while maintaining computational efficiency, as it avoids the prohibitive cost of per-sample selection. These findings collectively indicate that $D^3$ strikes an effective balance between domain specialization and multi-task generalization, advancing the Pareto frontier of model performance across diverse capabilities.

\subsection{Ablation Study}
\label{sec:ablation}
We evaluate alternative configurations of $D^3$'s core modules to assess each component's contribution. To this end, we analyze each design variant along two primary dimensions: evaluating its downstream performance and measuring its computational efficiency.

\paragraph{Impact of Influence Graph Approximations}
The fidelity of the influence estimator is a critical determinant of the scheduling quality in the $D^3$ framework. Figure \ref{fig:method_tradeoff} illustrates the Pareto frontier between perplexity reduction and relative training overhead across different approximation strategies. As observed, the first-order symmetric method incurs negligible computational cost but yields only limited performance gains. This limitation stems from its inability to characterize the asymmetric curvature of the loss landscape, which is essential for identifying directional influence between batches.
\begin{figure}[htbp]
    \centering
    \resizebox{0.85\columnwidth}{!}{%
        \includegraphics{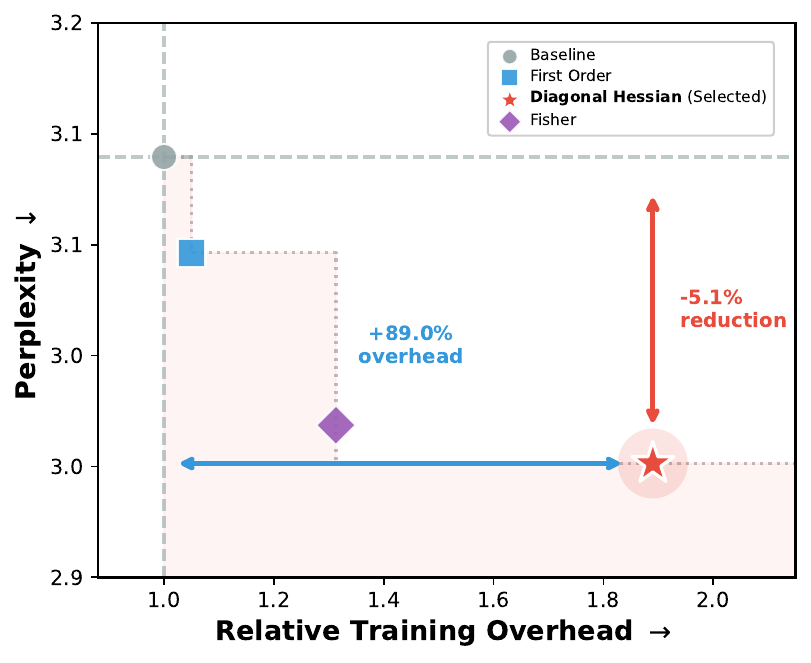}
    }
\caption{Performance-efficiency trade-off of various influence estimation methods on GPT-2 Medium trained on SlimPajama. The plot depicts the perplexity ($y$-axis) versus the relative training computation overhead ($x$-axis) compared to the baseline. }
\label{fig:method_tradeoff}
\end{figure}

A significant gap is observed between the Fisher Information Matrix (FIM)~\cite{DBLP:conf/iclr/XieSS21} and the Diagonal Hessian approach. While the Fisher method introduces a moderate overhead of approximately 1.3$\times$, its perplexity improvement remains suboptimal compared to the diagonal Hessian. Theoretically, the Fisher matrix serves as a statistical expectation of the Hessian under specific distribution assumptions. However, in the context of dynamic data scheduling, the scheduler requires a precise characterization of instantaneous interactions. The Fisher approximation tends to smooth out local curvature fluctuations, thereby providing less reliable signals for micro-batch reordering during the non-stationary phases of training.

In contrast, the $D^3$ framework utilizing the diagonal Hessian estimation achieves a breakthrough in the optimal lower-right quadrant of the frontier. By directly sampling the local Hessian via the Hutchinson estimator, $D^3$ successfully captures the asymmetric dominance relations between samples. As highlighted in Figure \ref{fig:method_tradeoff}, this high-fidelity estimation leads to a 5.1\% reduction in perplexity relative to the baseline. Although this precision comes with an 89.0\% increase in relative training overhead per step, the substantial gain in convergence quality justifies the investment. This result confirms that the benefit of transitioning from passive stochastic sampling to influence-constrained active ordering outweighs the additional cost of second-order computation, ultimately facilitating more efficient foundation model pre-training.
\begin{figure*}[htbp]
    \centering
    \includegraphics[width=0.85\linewidth]{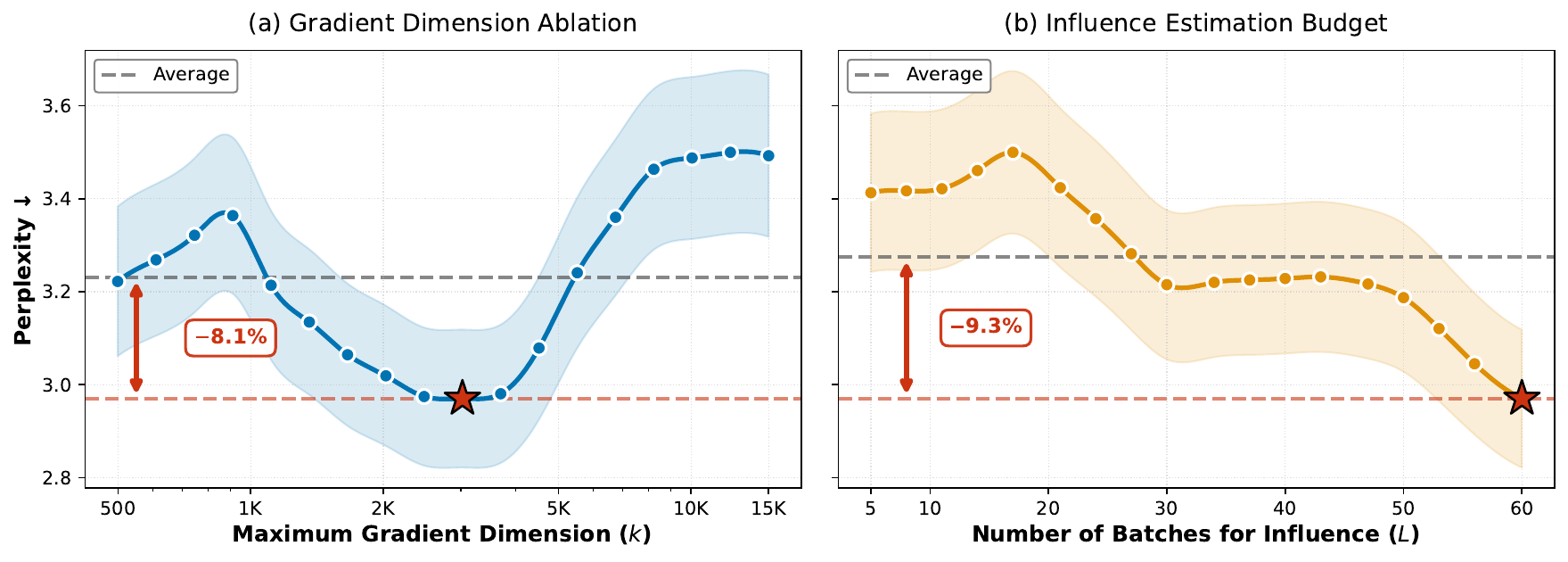}
    \vspace{-3pt}
\caption{Ablation studies on approximation hyper-parameters. 
(a) Effect of the gradient projection dimension $k$ on model performance. The red star identifies the optimal dimensionality ($k \approx 3500$) for maximizing the signal-to-noise ratio in influence estimation, resulting in an 8.1\% reduction in perplexity. 
(b) Impact of the scheduling window size $L$ on convergence quality. Increasing the scheduling resolution enables a more global optimization of the influence objective, achieving a peak gain of 9.3\% at $L=60$.}
    \label{fig:ablation_results}
\end{figure*}

\paragraph{Impact of Different Optimizers}
The success of the $D^3$ framework depends heavily on efficiently solving the induced optimization problem to find a high-quality training sequence. We evaluate several candidate solvers for this optimization problem, primarily focusing on the Row-Sum and Greedy Insertion baselines in comparison with our proposed Random-Swap Refinement. As evidenced by the results in Table~\ref{tab:solver-efficiency}, RSR achieves the most favorable optimization trajectory and produces the lowest perplexity among all tested methods. Although RSR increases training time, its superior performance justifies the overhead for those prioritizing final model quality.
\begin{table}[htbp]
  \centering
  \caption{Efficiency and performance comparison of batch ordering solvers for GPT-2 Medium pre-training. Experiments are conducted on NVIDIA H200 GPUs.}
  \vspace{-1mm} 
  \label{tab:solver-efficiency}
  \resizebox{0.7\columnwidth}{!}{
    \begin{tabular}{lcc}
      \toprule
      \textbf{Method} & \textbf{GPU Hours} & \textbf{PPL ($\downarrow$)} \\
      \midrule
      Random (Baseline)      & 5.53  & 3.13 \\
      Row-Sum                & 9.82  & 3.04 \\
      Greedy Insertion       & 10.15 & 3.17 \\
      RSR                    & 10.45 & 2.97 \\
      \bottomrule
    \end{tabular}
  }
  \vspace{-1mm}
\end{table}

\subsection{Hyperparameter Study}
We conduct experiments on the key hyper-parameters of $D^3$ to understand their influence on training dynamics. First, we examine the gradient projection dimension $k$, which defines the low-dimensional manifold for influence estimation. As shown in Figure~\ref{fig:ablation_results}(a), performance is sensitive to this dimensionality: values below $1000$ fail to capture sufficient gradient interactions, while dimensions exceeding $10000$ introduce stochastic noise that masks the influence signal. The peak performance at $k = 3500$, resulting in an 8.1\% perplexity reduction, identifies the optimal signal-to-noise ratio for characterizing batch interactions. Second, we evaluate the impact of the scheduling window $L$ on convergence quality. Figure~\ref{fig:ablation_results}(b) illustrates that perplexity consistently decreases as the window expands, achieving a 9.3\% gain at $L=60$. This trend highlights the necessity of a global perspective in batch sequencing; a larger window allows the solver to coordinate sequences over a broader horizon, effectively mitigating the ``short-sighted'' conflicts found in local heuristics. These results suggest that increasing the scheduling resolution is a robust lever for enhancing the training efficiency of large-scale foundation models.

% \subsection{Computational Cost Study}

% \begin{table}[h]
% \centering
% \caption{Computational cost and training efficiency analysis on Llama-3-8B. Overhead is measured as the percentage of additional wall-clock time per step compared to Random Shuffle. TTA (Time-to-Accuracy) denotes the total training time required to reach a validation PPL of 11.0.}
% \label{tab:cost_analysis}
% \resizebox{\columnwidth}{!}{
% \begin{tabular}{@{}lcccc@{}}
% \toprule
% \textbf{Method} & \textbf{Step Time (s)} & \textbf{Overhead} & \textbf{Mem. (GB)} & \textbf{TTA (Hours)} \\ \midrule
% Random Shuffle & - & -- & - & - \\
% DoReMi \cite{xie2023doremi} & - & - & - & - \\
% GRAPE \cite{anonymous2025grape} & - & - & - & - \\
% \rowcolor[HTML]{F3F3F3} 
% \textbf{Ours (k=512)} & \textbf{-} & \textbf{-} & \textbf{-} & \textbf{-} \\
% \rowcolor[HTML]{F3F3F3} 
% \textbf{Ours (k=1024)} & - & - & - & - \\ \bottomrule
% \end{tabular}%
% }
% \end{table}
\vspace{-1mm}
\section{Conclusion}
In this work, we introduced $D^3$, a principled framework that optimizes the training trajectory of LLMs by exploiting the non-exchangeability of training samples. By formalizing second-order asymmetric interactions through a dynamic influence graph and employing scalable batch-level reordering, we demonstrate that ``when a model learns'' is as critical as ``what it learns." Empirical results across pre-training and reasoning benchmarks confirm that $D^3$ effectively accelerates convergence and improves performance, providing a path-dependent perspective for data-centric LLMs training.

\vspace{-1mm}
\section*{Limitations}
While $D^3$ demonstrates significant empirical gains, certain constraints remain to be addressed in future work. The resource consumption could be further compressed through more advanced gradient flow techniques. More efficient ways to extract the scheduling signal, like layer-selective gradient compression in LLMs, have not been explored. Furthermore, our theoretical analysis of convergence acceleration is currently grounded in a simplified linear regression proxy to intuitively verify the method. Extending these guarantees to the high-dimensional, non-convex landscape of Transformers remains a fundamental theoretical challenge that warrants deeper investigation. 
\newpage
\clearpage

\section*{Impact Statement}

This paper presents work whose goal is to advance the field of Machine Learning. There are many potential societal consequences of our work, none which we feel must be specifically highlighted here.

% In the unusual situation where you want a paper to appear in the
% references without citing it in the main text, use \nocite
% \nocite{langley00}

\bibliography{main}
\bibliographystyle{icml2026}

\onecolumn
\appendix
\section{Experimental Settings}
\label{sec:exp_setting}

\subsection{Benchmarks}
Following standard protocols in the literature, we evaluate our method across two distinct stages: pre-training and post-training.
\paragraph{Pretraining Evaluation.} 
For the pre-training stage, we conduct a comprehensive evaluation focusing on both intrinsic language modeling and downstream reasoning capabilities. 
Specifically, we first reserve $10\%$ of the pre-training data as a held-out validation set to PPL, which serves as a direct indicator of the model's convergence and linguistic mastery. 
The statistical profile of the corpus used in this stage is summarized in Figure~\ref{fig:data_stats}. Complementing the internal PPL analysis, we further evaluate the model's capabilities across several fundamental benchmarks for linguistic understanding and commonsense reasoning. 
Under a few-shot setting, we conduct evaluations on HellaSwag \citep{zellers2019hellaswag}, PIQA \citep{bisk2020piqa}, OpenBookQA \citep{mihaylov2018openbookqa}, COPA \citep{sarlin2020copa}, and WinoGrande \citep{sakaguchi2021winogrande}. 
These tasks are designed to scrutinize the model's capacity for robust inference and world-knowledge retrieval, providing a holistic assessment of the pre-training quality without the need for task-specific fine-tuning.

\paragraph{Posttraining Evaluation.} To evaluate the model's specialized capabilities after  post-training, we further benchmark its performance on complex math reasoning and code generation tasks. Specifically, we use GSM8K \citep{cobbe2021gsm8k} and MATH \citep{hendrycks2021math} to measure multi-step mathematical reasoning, and HumanEval \citep{chen2021evaluatinglargelanguagemodels} and MBPP \citep{austin2021program} to evaluate functional correctness in programming. Table \ref{tab:benchmarks} provides a comprehensive summary of the statistics and characteristics of these evaluation datasets.
\begin{figure}[htbp]
    \centering
    \includegraphics[width=0.7\textwidth]{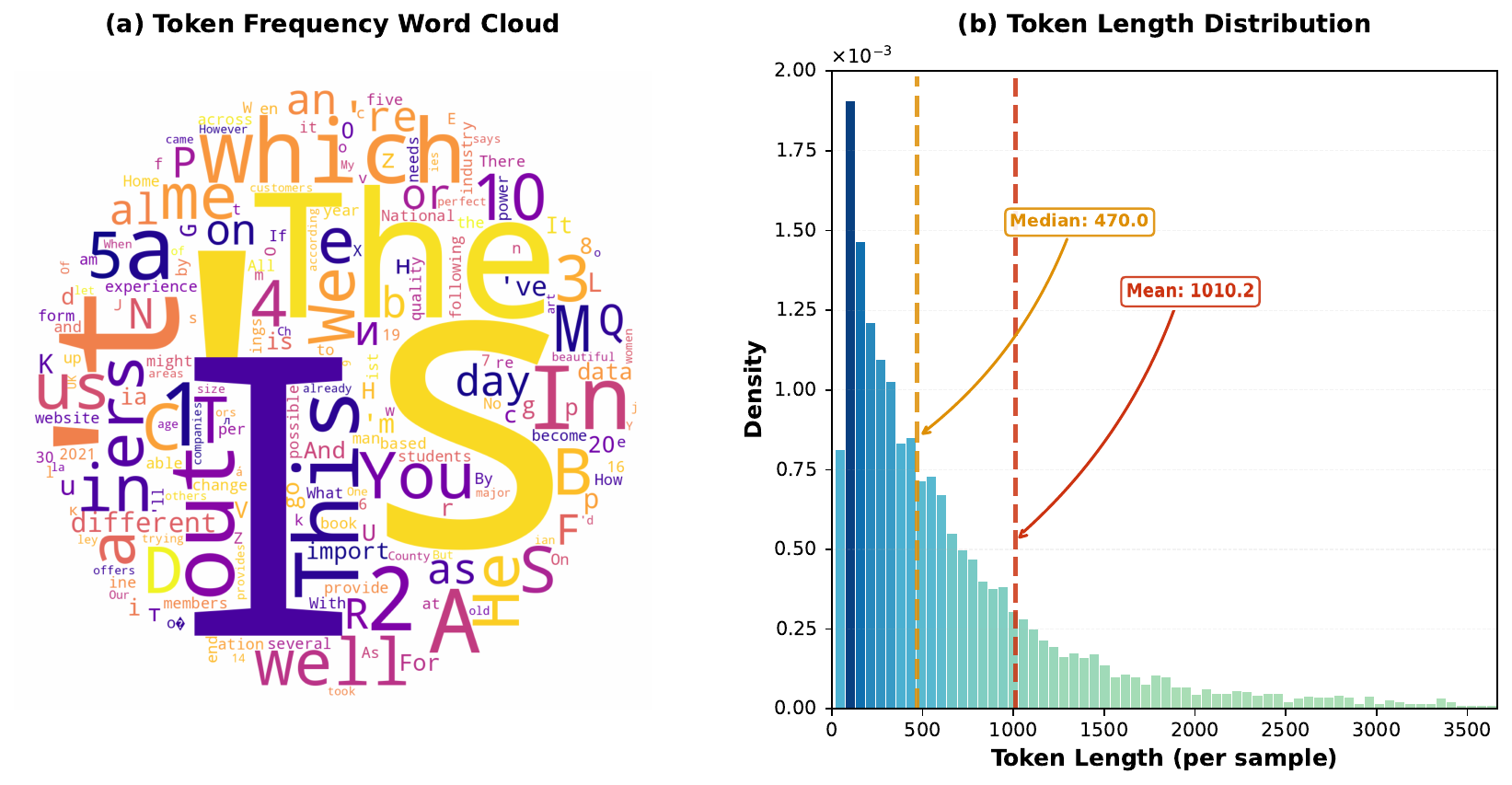} 
    \caption{Data statistics of the SlimPajama validation set used for PPL evaluation. (a) Token Frequency Word Cloud: visualizing common terms within the corpus. (b) Token Length Distribution: the samples follow a right-skewed distribution with a mean length of $1010.2$ and a median of $470.0$ tokens.}
    \label{fig:data_stats}
\end{figure}

\begin{table*}[htbp]
\centering
\small
\caption{
Summary of evaluation benchmarks for Pre-training and Post-training stages. 
All tasks are evaluated based on their respective standard metrics to ensure comparability with prior work.
}
\label{tab:benchmarks}
\resizebox{0.82\textwidth}{!}{
\begin{tabular}{llclll}
\toprule
\textbf{Stage} & \textbf{Benchmark} & \textbf{Size} & \textbf{Task Type} & \textbf{Domain Category} & \textbf{Metric} \\
\midrule
\multirow{5}{*}{\textbf{Pre-training}} 
& HellaSwag & 10,042 & Multiple Choice & Commonsense Reasoning & Accuracy \\
& PIQA      & 1,838  & Multiple Choice & Physical Commonsense  & Accuracy \\
& OBQA      & 500    & Multiple Choice & Scientific Reasoning  & Accuracy \\
& COPA      & 100    & Multiple Choice & Causal Reasoning      & Accuracy \\
& WinoGrande& 1,267  & Multiple Choice & Linguistic Reasoning  & Accuracy \\
\midrule
\multirow{4}{*}{\textbf{Post-training}} 
& GSM8K     & 1,319  & Question Answering & Grade School Math     & Exact Match \\
& MATH      & 5,000  & Question Answering & Competition Mathematics & Exact Match \\
& HumanEval & 164    & Code Generation    & Programming (Python)  & Pass@1 \\
& MBPP      & 500    & Code Generation    & Programming (Python)  & Pass@1 \\
\bottomrule
\end{tabular}
}
\end{table*}

\subsection{Training Datasets}
Our selection of training corpora is strategically aligned with the parameter scale of the respective models. For GPT-2 Small and Medium, we employ SlimPajama, a highly refined and deduplicated corpus. Conversely, for the larger LLaMA-1.1B and LLaMA-3.2-3B models, we utilize The Pile. Given their significantly higher model capacity, these architectures are better suited to ingest the diverse and complex domain-specific knowledge (e.g., mathematics, legal documents, and scientific literature) provided by The Pile, thereby maximizing their multi-domain reasoning potential.  The specific data sources and their respective proportions within each dataset are summarized in Table~\ref{tab:dataset_mixture}.
\begin{table}[t]
\centering
\caption{Detailed composition of our training datasets. Proportions are calculated based on byte size. Items marked with $\dagger$ indicate datasets with potential copyright restrictions.}
\label{tab:dataset_mixture}
\resizebox{0.52\textwidth}{!}{%
% \small
\begin{tabular}{lr | lr}
\toprule
\textbf{Data Source} & \textbf{Ratio (\%)} & \textbf{Data Source} & \textbf{Ratio (\%)} \\
\midrule
\multicolumn{4}{l}{\textit{SlimPajama-6B Mixture}} \\
Common Crawl & 54.10 & ArXiv & 3.40 \\
C4 & 28.70 & Wikipedia & 3.10 \\
GitHub & 4.20 & StackExchange & 2.80 \\
Books & 3.70 & & \\
\midrule
\midrule
\multicolumn{4}{l}{\textit{The Pile Mixture}} \\
Pile-CC & 18.21 & OpenSubtitles$^{\dagger}$ & 1.56 \\
PubMed Central & 14.48 & Wikipedia (en) & 1.53 \\
Books3$^{\dagger}$ & 12.14 & DM Mathematics & 1.24 \\
OpenWebText2$^{\dagger}$ & 10.07 & Ubuntu IRC & 0.88 \\
ArXiv & 9.01 & BookCorpus2$^{\dagger}$ & 0.76 \\
Github & 7.63 & EuroParl & 0.74 \\
FreeLaw & 6.15 & HackerNews & 0.63 \\
Stack Exchange & 5.16 & YoutubeSubtitles$^{\dagger}$ & 0.60 \\
USPTO Backgrounds & 3.67 & PhilPapers & 0.38 \\
PubMed Abstracts & 3.09 & NIH ExPorter & 0.30 \\
Gutenberg (PG-19) & 2.18 & Enron Emails & 0.14 \\
\bottomrule
\end{tabular}}
\end{table}

\subsection{Backbone Models}
We evaluate our proposed method across a spectrum of model scales and architectural designs, as summarized in Table~\ref{tab:model-arch}. Our experiments cover two representative families: the GPT-2 series Medium, representing classic decoder-only Transformers with absolute positional embeddings, and the Llama series (1.1B), incorporating modern advancements such as Rotary Positional Embeddings (RoPE), SwiGLU activation functions, and Grouped-Query Attention. 
\begin{table}[ht]
\centering
\caption{Architectural configurations of the evaluated models. Models are grouped by family, covering a parameter range from 210M to 3B. $L$: Layers, $H$: Attention Heads, $d_{model}$: Embedding Dimension, $d_{ff}$: Hidden Dimension.}
\label{tab:model-arch}
\centering
\resizebox{0.4\columnwidth}{!}{%
\begin{tabular}{lcc}
\toprule
 &  \textbf{Llama-1.1B} & \textbf{GPT-2-Medium} \\
\midrule
Parameters      & 1.1B & 355M   \\
Layers          & 22  & 36  \\
Attention Heads & 32  & 24  \\
Embedding Dim.  & 2048 & 768   \\
Hidden Dim.     & 2048 & 3072  \\
Max Seq. Length & 2048 & 512  \\
\bottomrule
\end{tabular}%
}
\end{table}

\subsection{Implement Details}
\label{sec:impl_details}

Our experimental framework is implemented using FP16 mixed-precision training to optimize computational throughput on a cluster equipped with two NVIDIA H200 GPUs. For model optimization, we employ the Adam optimizer with hyper-parameters $\beta_1 = 0.9$, $\beta_2 = 0.999$, and $\epsilon = 10^{-8}$, incorporating a weight decay coefficient of 0.01. The learning rate is managed via a cosine annealing scheduler, which begins with a 500-step linear warmup phase to a peak value of $5.0 \times 10^{-4}$, before decaying to a minimum of $1.0 \times 10^{-4}$. Regarding the configuration for influence estimation, we maintain a consistent batch size of 16 and fix the random seed to 32 to ensure reproducibility across all trials. A critical hyper-parameter in our framework is the scaling factor $\gamma$, which governs the intensity of the directional constraints. 
Empirically, we observe that setting $\gamma$ proportional to the learning rate $\eta$, specifically $\gamma = 10^2 \eta$, yields the most stable and superior performance across various tasks. The maximum gradient dimension $k$ is specifically tuned for different architectures, set to 35,00 for GPT-series models and 5,000 for LlaMa-based experiments.  All other training configurations follow the standard settings of the respective foundation model baselines to ensure a fair comparison.

\begin{algorithm}
\caption{$D^{3}$: Influence-Aware Batch Scheduler}
\label{alg:d3_app}
\begin{algorithmic}[1]
\REQUIRE Parameters $\theta^{(0)}$, data loader $\mathcal{D}$, learning rate $\eta$, 
         look-ahead parameter $\gamma$, chunk size $L$, projection dim $k$
\ENSURE Optimized parameters $\theta^{(T)}$
\STATE Initialize random projection matrix $\mathbf{R} \in \mathbb{R}^{d \times k}$

\FOR{step $t = 0$ \TO $T-1$}
    \item[] \textit{// Phase 1: Compute compressed gradient information}
    \STATE Sample $L$ batches $\{\mathcal{B}_\ell\}_{\ell=1}^L$ from $\mathcal{D}$
    \STATE $\{\tilde{g}_\ell, \lambda_\ell\}_{\ell=1}^L \gets \textsc{Sketch}(\{\mathcal{B}_\ell\}, \theta^{(t)}, \mathbf{R})$
    
    \item[] \textit{// Phase 2: Build influence advantage matrix}
    \STATE $\mathbf{S} \gets \textsc{Advantage}(\{\tilde{g}_\ell, \lambda_\ell\}, \gamma)$
    
    \item[] \textit{// Phase 3: Optimize ordering}
    \STATE $\pi \gets \textsc{RSR}(\mathbf{S})$
    
    \item[] \textit{// Phase 4: Sequential update}
    \STATE $\theta^{(t+1)} \gets \textsc{SGDUpdate}(\theta^{(t)}, \{\mathcal{B}_{\pi(r)}\}_{r=1}^L, \eta)$
\ENDFOR
\STATE \textbf{return} $\theta^{(T)}$
\end{algorithmic}
\end{algorithm}

\begin{algorithm}
\caption{Sketch (Phase 1)}
\begin{algorithmic}[1]
\REQUIRE Batches $\{\mathcal{B}_\ell\}$, parameters $\theta$, projection matrix $\mathbf{R}$
\ENSURE Compressed gradients $\{\tilde{g}_\ell\}$, curvatures $\{\lambda_\ell\}$
\FOR{each batch $\mathcal{B}_\ell$}
    \STATE $g_\ell \gets \nabla_\theta \ell(\mathcal{B}_\ell; \theta)$
    \STATE $\tilde{g}_\ell \gets \mathbf{R}^\top g_\ell$ \COMMENT{Compress gradient via random projection}
    \STATE $\lambda_\ell \gets \textsc{EstimateCurvature}(g_\ell, \mathcal{B}_\ell, \theta)$ \COMMENT{Using Hutchinson method}
\ENDFOR
\STATE \textbf{return} $\{\tilde{g}_\ell, \lambda_\ell\}$
\end{algorithmic}
\end{algorithm}

\begin{algorithm}
\caption{Advantage (Phase 2)}
\begin{algorithmic}[1]
\REQUIRE Compressed gradients $\{\tilde{g}_\ell\}$, curvatures $\{\lambda_\ell\}$, look-ahead $\gamma$
\ENSURE Advantage matrix $\mathbf{S} \in \mathbb{R}^{L \times L}$
\STATE Initialize $\mathbf{S} \gets \mathbf{0}_{L \times L}$
\FOR{$i = 1$ \TO $L$}
    \FOR{$j = 1$ \TO $L$}
        \STATE $\mathcal{I}_{i \to j} \gets -\gamma (\tilde{g}_j)^\top \tilde{g}_i + \frac{\gamma^2}{2} \lambda_j$
        \STATE $\mathcal{I}_{j \to i} \gets -\gamma (\tilde{g}_i)^\top \tilde{g}_j + \frac{\gamma^2}{2} \lambda_i$
        \STATE $\mathbf{S}_{ij} \gets \mathcal{I}_{j \to i} - \mathcal{I}_{i \to j}$
    \ENDFOR
\ENDFOR
\STATE \textbf{return} $\mathbf{S}$
\end{algorithmic}
\end{algorithm}

\begin{algorithm}
\caption{RSR (Phase 3)}
\begin{algorithmic}[1]
\REQUIRE Advantage matrix $\mathbf{S}$, maximum swaps $M = 100$
\ENSURE Permutation $\pi$ representing the batch ordering
\STATE Compute row sums $r_\ell \gets \sum_{j=1}^L \mathbf{S}_{\ell j}$ for $\ell=1,\dots,L$
\STATE $\pi \gets \operatorname{argsort}(\{r_\ell\}, \text{descending})$ \COMMENT{Initial ordering by row sum}
\FOR{$m = 1$ \TO $M$}
    \STATE Randomly select indices $p, q$ with $1 \leq p < q \leq L$
    \STATE Compute cost difference $\Delta\mathcal{C} \gets \mathbf{S}_{\pi(p),\pi(q)} + \sum_{k=p+1}^{q-1} (\mathbf{S}_{\pi(p),\pi(k)} + \mathbf{S}_{\pi(k),\pi(q)})$
    \item[] \textit{// Swapping improves the objective}
    \IF{$\Delta\mathcal{C} < 0$} 
        \STATE Swap $\pi(p)$ and $\pi(q)$ in the permutation
    \ENDIF
\ENDFOR
\STATE \textbf{return} $\pi$
\end{algorithmic}
\end{algorithm}

\begin{algorithm}
\caption{SGDUpdate (Phase 4)}
\begin{algorithmic}[1]
\REQUIRE Parameters $\theta$, ordered batches $\{\mathcal{B}_r\}$, learning rate $\eta$
\ENSURE Updated parameters $\theta'$
\STATE $\theta_{\text{current}} \gets \theta$
\FOR{$r = 1$ \TO $\text{length}(\{\mathcal{B}_r\})$}
    \STATE $\theta_{\text{current}} \gets \theta_{\text{current}} - \eta \cdot \nabla_\theta \ell(\mathcal{B}_r; \theta_{\text{current}})$
\ENDFOR
\STATE \textbf{return} $\theta_{\text{current}}$
\end{algorithmic}
\end{algorithm}

\clearpage
\section{Algorithmic Details of Candidate Solvers}
\label{subsec:solver_details}

In this section, we delineate the algorithmic foundations of the candidate solvers used to optimize the influence-aware scheduling objective. Each solver represents a distinct trade-off between computational complexity and the capacity to resolve non-transitive influence cycles. We consider three approaches: Row-Sum Sorting, Greedy Insertion, and the Random-Swap Refinement solver proposed in Section~\ref{subsec:solver}.

\paragraph{Row-Sum Sorting (RS).}
This baseline approach reduces the graph-structured optimization problem to a simple scalar ranking. For each batch $i$, we compute its aggregate advantage score as the row sum of the directional advantage matrix:
\[
u_i = \sum_{j=1}^{K} \mathbf{S}^{(t)}_{ij}.
\]
Since $\mathbf{S}^{(t)}_{ij} > 0$ indicates that placing batch $i$ before $j$ yields greater loss reduction than the reverse order, a larger $u_i$ suggests that batch $i$ exerts stronger global positive influence when scheduled earlier. The schedule $\pi$ is generated by sorting batches in \textbf{descending} order of $u_i$. The time complexity is $O(K^2)$ for computing all row sums plus $O(K \log K)$ for sorting, totaling $O(K^2)$.

\paragraph{Greedy Insertion (GI).} 
To better capture local dominance relations while maintaining tractability, the Greedy Insertion solver constructs the sequence incrementally. Starting with an empty schedule $\pi = [\,]$, at each step $s = 1, \dots, K$, the algorithm selects the batch that maximizes its total advantage over all remaining (unscheduled) batches:
\[
i^* = \arg\max_{i \in \mathcal{C}} \sum_{j \in \mathcal{C} \setminus \{i\}} \mathbf{S}^{(t)}_{ij},
\]
where $\mathcal{C}$ denotes the set of batches not yet scheduled. The selected batch $i^*$ is appended to $\pi$, and removed from $\mathcal{C}$. This process repeats until $\mathcal{C}$ is empty. The GI strategy has time complexity $O(K^2)$ but often produces higher-quality schedules than RS by considering pairwise relations during construction.

\paragraph{Random-Swap Refinement (RSR).}
As detailed in Section~\ref{subsec:solver}, our proposed solver begins with an initial ordering (e.g., from RS or GI) and iteratively improves it through random pairwise swaps. In each iteration, two distinct positions $p$ and $q$ ($p < q$) are chosen uniformly at random. Let $i = \pi(p)$ and $j = \pi(q)$. The change in total violation cost $\Delta\mathcal{C}$ resulting from swapping $i$ and $j$ is computed as:
\[
\Delta\mathcal{C} = \mathbf{S}^{(t)}_{ij} + \sum_{k \in I(p,q)} \bigl( \mathbf{S}^{(t)}_{ik} + \mathbf{S}^{(t)}_{kj} \bigr),
\]
where $I(p,q) = \{\pi(m) \mid p < m < q\}$ is the set of batches between positions $p$ and $q$ in the current schedule. If $\Delta\mathcal{C} < 0$, the swap is accepted; otherwise, the current schedule is retained. This process continues for a predefined number of iterations $T$, yielding a total complexity of $O(K^2 + TK)$. The RSR solver effectively escapes local optima and can resolve complex cycles through its stochastic exploration of the permutation space.

\subsection{Numerical Illustration}
\label{subsec:numerical_illustration}

We provide a concrete illustration with $K=5$ micro-batches. The directional advantage matrix $\mathbf{S}$ contains a non-transitive cycle among batches 3, 4, and 5:
\[
\mathbf{S} = \begin{bmatrix}
0 & -10 & 2 & 1 & 8 \\
10 & 0 & 1 & 0 & 2 \\
-2 & -1 & 0 & 10 & -5 \\
-1 & 0 & -10 & 0 & 10 \\
-8 & -2 & 5 & -10 & 0
\end{bmatrix}.
\]
Our objective is to maximize $\mathcal{F}(\pi) = \sum_{i < j} \mathbf{S}_{\pi(i)\pi(j)}$.

\paragraph{Row-Sum Sorting.} The row sums are $u = [1, 13, 2, -1, -15]$. Sorting in descending order yields $\pi_{\text{RS}} = [2, 3, 1, 4, 5]$. The objective value is $\mathcal{F}(\pi_{\text{RS}}) = 35$.

\paragraph{Greedy Insertion.} The algorithm proceeds as follows. Step 1: $\mathcal{C}=\{1,2,3,4,5\}$. Batch 2 has the highest total advantage (13) and is scheduled first. Step 2: $\mathcal{C}=\{1,3,4,5\}$. Batch 1 has the highest advantage (11) and is scheduled second. Step 3: $\mathcal{C}=\{3,4,5\}$. Batch 3 is selected (advantage 5). Step 4: $\mathcal{C}=\{4,5\}$. Batch 4 is selected ($\mathbf{S}_{4,5}=10 > \mathbf{S}_{5,4}=-10$). Step 5: Batch 5 is appended. The final schedule is $\pi_{\text{GI}} = [2, 1, 3, 4, 5]$ with $\mathcal{F}(\pi_{\text{GI}}) = 35$.

\paragraph{Random-Swap Refinement.} Starting from $\pi_0 = \pi_{\text{RS}} = [2,3,1,4,5]$, we perform sample swap trials. Trial 1 (swap positions 2 and 4): $i=3, j=4, I=\{1\}, \Delta\mathcal{C} = \mathbf{S}_{3,4} + (\mathbf{S}_{3,1}+\mathbf{S}_{1,4}) = 10 + (-2+1) = 9 > 0$, swap rejected. Trial 2 (swap positions 3 and 5): $i=1, j=5, I=\{4\}, \Delta\mathcal{C} = \mathbf{S}_{1,5} + (\mathbf{S}_{1,4}+\mathbf{S}_{4,5}) = 8 + (1+10) = 19 > 0$, rejected. Trial 3 (swap positions 1 and 3): $i=2, j=1, I=\{3\}, \Delta\mathcal{C} = \mathbf{S}_{2,1} + (\mathbf{S}_{2,3}+\mathbf{S}_{3,1}) = 10 + (1-2) = 9 > 0$, rejected. The schedule remains $[2,3,1,4,5]$ with $\mathcal{F}=35$. This example shows the solver's local search nature; with more trials it can escape suboptimal configurations (e.g., $[2,1,4,3,5]$ achieves $\mathcal{F}=37$).

\clearpage
\section{Formulations of Influence Estimation Methods}
\label{sec:influence_derivations}

This section formally defines the influence functions evaluated in our experiments. All methods are based on the training influence prediction (Definition~\ref{def:training_influence}) and its second-order approximation (Proposition~\ref{prop:influence_properties}), trading off computational accuracy against efficiency.

\paragraph{Random Baseline}
The random baseline follows a standard stochastic optimization procedure without any influence-based reordering. At each step $t$, a random permutation $\pi \in \mathcal{S}_K$ is generated to process training batches:
\[
\pi = \text{random\_permutation}([1, 2, \dots, K]).
\]

\paragraph{First-Order Symmetric Method}
This method characterizes influence solely through first-order gradient interactions. Using the approximate first-order term, the influence of batch $i$ on batch $j$ is defined as:
\[
\mathcal{I}^{(1)}_{i \to j}(t) = -\eta \, \nabla_\theta \ell(\mathcal{B}_j; \theta^{(t)})^\top \nabla_\theta \ell(\mathcal{B}_i; \theta^{(t)}) = -\eta \langle g_i, g_j \rangle,
\]
where $g_i = \nabla_\theta \ell(\mathcal{B}_i; \theta^{(t)})$ denotes the mean gradient of batch $i$, and $\eta$ is the learning rate. This formulation is symmetric with respect to $i$ and $j$.

\paragraph{Fisher Information Matrix Method}
The Fisher method approximates the per-batch Hessian through the Fisher information matrix computed over all batches. First, the empirical Fisher matrix is calculated:
\[
\text{FIM} = \frac{1}{K} \sum_{k=1}^K g_k g_k^\top.
\]
The influence is then obtained by combining the first-order inner product with a FIM-weighted second-order term:
\[
\mathcal{I}_{\text{Fisher}, i \to j}(t) = -\eta \langle g_i, g_j \rangle + \frac{\eta^2}{2} g_i^\top \text{FIM} \, g_j.
\]
Here, the second-order term approximates $\mathcal{I}^{(2)}_{i \to j}(t)$ using the shared Fisher matrix instead of the batch-specific Hessian $\nabla^2 \ell(\mathcal{B}_j; \theta^{(t)})$.

\paragraph{Diagonal Hessian via Hutchinson}
The $D^3$ framework achieves its highest precision using Sophia-based estimation. We employ the Hutchinson randomized estimator to approximate a shared diagonal Hessian via $N=5$ samples of Hessian-vector products:
\[
\hat{h}_{\text{diag}} = \frac{1}{N} \sum_{k=1}^{N} u^{(k)} \odot (H u^{(k)}), \quad u^{(k)} \sim \mathcal{N}(0, I_d),
\]
where $H$ is the Hessian of the total loss. For each batch $j$, the curvature scalar is computed as $\lambda_j = g_j^\top \text{diag}(\hat{h}_{\text{diag}}) g_j$. The influence is then approximated as:
\[
\mathcal{I}_{\text{Sophia}, i \to j}(t) = -\eta \langle g_i, g_j \rangle + \frac{\eta^2}{2} \lambda_j.
\]
Compared to heuristic scaling, this formulation provides a more principled characterization of the loss landscape geometry, albeit using a shared Hessian approximation and batch-specific curvature scalars.

\clearpage
\section{More Results and Analysis}
\label{sec:app-more-results}
\subsection{A Toy Example for Understanding $D^3$}
\label{app:toy_influence_cycle}

\begin{figure*}[htbp]
    \centering
    \includegraphics[width=0.75\linewidth]{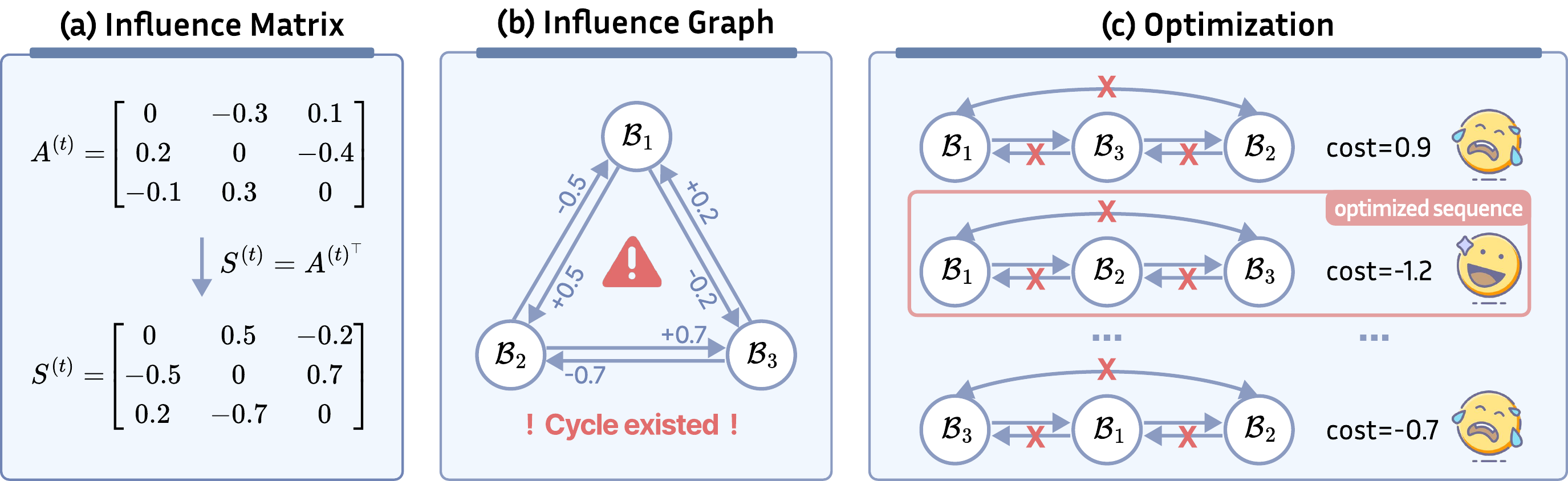}
    \caption{Mechanistic illustration of $D^3$ scheduling on a non-transitive influence graph. Nodes $A, B$, and $C$ represent distinct training samples (or mini-batches), while directed edges quantify the pairwise directional advantages $S_{ij}$ derived from gradient alignment. The depicted cyclic dependency ($A \to B \to C \to A$) represents a fundamental scheduling conflict; $D^3$ resolves this by identifying the permutation that minimizes the total penalty of violated directional advantages, effectively breaking the cycle at the weakest link to ensure training stability.}
    \label{app:example}
\end{figure*}

To demonstrate the robustness of our framework under inconsistent constraints, we consider a minimal $N=3$ case where dominance relations induce a directed cycle. At optimization step $t$, we assume the sample-wise influence matrix $\mathbf{A}^{(t)} \in \mathbb{R}^{3 \times 3}$ is given by
$$
\mathbf{A}^{(t)} =
\begin{bmatrix}
0 & -0.3 & 0.1 \\
0.2 & 0 & -0.4 \\
-0.1 & 0.3 & 0
\end{bmatrix},
$$
where each entry $A^{(t)}_{ij} = \mathcal{I}_{i \to j}(t)$ represents the influence value following Definition~\ref{def:dynamic_influence_graph}. Under the dominance criterion where an edge $(i,j)$ exists if $\mathcal{I}_{i \to j}(t) < \mathcal{I}_{j \to i}(t)$, we observe a set of pairwise constraints: $A_{12} < A_{21}$ implies $(1,2) \in \mathcal{E}^{(t)}_{\mathrm{dom}}$, $A_{23} < A_{32}$ implies $(2,3) \in \mathcal{E}^{(t)}_{\mathrm{dom}}$, and $A_{31} < A_{13}$ implies $(3,1) \in \mathcal{E}^{(t)}_{\mathrm{dom}}$. Consequently, the dominance graph $\mathcal{G}^{(t)}_{\mathrm{dom}}$ forms a directed cycle $1 \to 2 \to 3 \to 1$, rendering any consistent topological ordering impossible.

To resolve this conflict, we compute the directional advantage matrix $\mathbf{S}^{(t)} = \mathbf{A}^{(t)\top} - \mathbf{A}^{(t)}$, where $S^{(t)}_{ij}$ quantifies the net benefit of training $x_i$ before $x_j$:
$$
\mathbf{S}^{(t)} =
\begin{bmatrix}
0 & 0.5 & -0.2 \\
-0.5 & 0 & 0.7 \\
0.2 & -0.7 & 0
\end{bmatrix}.
$$
Following the cost-minimization formulation in Section~\ref{sec:influence_optimization}, we seek an ordering $\sigma^{(t)\star} \in \mathcal{S}_3$, where $\sigma(k)$ denotes the position assigned to sample $x_k$, that minimizes the total penalty incurred by violated directional advantages:
$$
\sigma^{(t)\star} \;=\; \arg\min_{\sigma \in \mathcal{S}_3} \mathcal{C}(\sigma), 
\qquad 
\mathcal{C}(\sigma) \;=\; \sum_{i \ne j} \max\!\bigl(0,\, S^{(t)}_{ij}\bigr) \cdot \mathbb{I}\bigl[\sigma(j) < \sigma(i)\bigr].
$$
Only the three positive entries $S_{12}=0.5$, $S_{23}=0.7$, and $S_{31}=0.2$ can contribute to the cost, each penalizing an ordering that places its preferred successor ahead of its predecessor. Writing each permutation as the resulting sequence of samples and evaluating $\mathcal{C}(\sigma)$ over all $\sigma \in \mathcal{S}_3$ yields:
$$
\begin{aligned}
\mathcal{C}([1,2,3]) &= S_{31} = 0.2, \quad & \mathcal{C}([1,3,2]) &= S_{23} + S_{31} = 0.9, \\
\mathcal{C}([2,1,3]) &= S_{12} + S_{31} = 0.7, \quad & \mathcal{C}([2,3,1]) &= S_{12} = 0.5, \\
\mathcal{C}([3,1,2]) &= S_{23} = 0.7, \quad & \mathcal{C}([3,2,1]) &= S_{12} + S_{23} = 1.2.
\end{aligned}
$$
The optimal schedule is thus $\sigma^{(t)\star} = [1,2,3]$ with cost $0.2$. This result illustrates that when faced with cyclic dominance, our cost-minimization formulation naturally sacrifices the weakest conflicting edge (in this case, $(3,1)$ with $S_{31}=0.2$) so as to preserve the two stronger directional preferences. Such a mechanism ensures training stability and efficiency even when local gradient alignments and Hessian curvatures induce inconsistent sample dependencies.
\hfill $\square$

\subsection{Pre-training on Larger Models}
We evaluate the scalability and effectiveness of D$^3$ by pre-training a 1.1B Llama-based model on 100B tokens of the Pile dataset. 
As illustrated in Table~\ref{tab:llama_pretraining_results}, D$^3$ consistently outperforms both sample-level and domain-level baselines across nearly all evaluation dimensions. Specifically, D$^3$ achieves a significant reduction in perplexity (2.53), representing a 4.1\% relative improvement over the strongest baseline.

\begin{table*}[t]
\caption{Pre-training performance on the Pile. All models utilize the LlaMa-based (1.1B) architecture and are trained on 100B tokens. The ``Aggregated Avg.'' is the arithmetic mean of the few-shot commonsense reasoning benchmarks shown below (HellaSwag, PIQA, OBQA, COPA, and WinoGrande). Subscripts denote the relative improvement or degradation compared to the strongest baseline. Best results are highlighted in bold.}
\label{tab:llama_pretraining_results}
\centering
\resizebox{0.9\textwidth}{!}{
\begin{tabular}{llc ccccc c}
\toprule
\multirow{2}{*}{\textbf{Granularity}} & \multirow{2}{*}{\textbf{Method}} & \textbf{Opt.} & \multicolumn{5}{c}{\textbf{Common Sense Reasoning}} & \textbf{Aggregated} \\ \cmidrule(lr){3-3} \cmidrule(lr){4-8} \cmidrule(l){9-9} 
 & & \textbf{PPL} $\downarrow$ & \textbf{Hella.} $\uparrow$ & \textbf{PIQA} $\uparrow$ & \textbf{OBQA} $\uparrow$ & \textbf{COPA} $\uparrow$ & \textbf{Wino.} $\uparrow$ & \textbf{Avg.} $\uparrow$ \\ \midrule

% --- Sample-level Methods ---
\multirow{2}{*}{Sample-level} 
 & Uniform Sampling & 2.68 & 31.2 & 68.5 & 28.4 & 64.0 & 51.1 & 48.64 \\
 & Dynamic Loss  & 2.64 & 31.8 & \textbf{69.2} & 29.6 & 65.5 & 52.5 & 49.72 \\ \midrule

% --- Domain-level Methods ---
\multirow{4}{*}{Domain-level} 
 & RegMix  & 2.75 & 31.4 & 68.8 & 29.4 & 66.5 & \textbf{53.6} & 49.94 \\
 & Data Mixing Law  & 2.74 & 31.3 & 68.4 & \textbf{30.2} & 65.9 & 51.3 & 49.42 \\
 & DoReMi  & 2.71 & 31.6 & 68.7 & 29.8 & 66.0 & 51.4 & 49.50 \\
 & DoGE  & 2.72 & 31.7 & 68.9 & 29.2 & 64.0 & 50.6 & 48.88 \\ \midrule \addlinespace[2pt]

% --- Our Method ---
\rowcolor[HTML]{F3F3F3} 
\textbf{Ours} & \textbf{D$^3$} & \textbf{2.53} \scriptsize{\textcolor{red}{-4.1\%}} & \textbf{32.8} \scriptsize{\textcolor{red}{+3.1\%}} & 68.9 \scriptsize{\textcolor{green!60!black}{-0.4\%}} & \textbf{33.1} \scriptsize{\textcolor{red}{+9.6\%}} & \textbf{71.2} \scriptsize{\textcolor{red}{+7.1\%}} & 53.4 \scriptsize{\textcolor{green!60!black}{-0.4\%}} & \textbf{51.88} \scriptsize{\textcolor{red}{+3.9\%}} \\ \bottomrule
\end{tabular}
}
\end{table*}
\subsection{Theoretical Analysis: An example}

We analyze the convergence properties under a linear regression setting with quadratic loss $L(\theta) = \frac{1}{2N} \sum_{i=1}^N (y_i - \theta^\top x_i)^2$. Let $\mathcal{C}_t$ denote the set of indices of samples not yet selected in the current epoch at step $t$, with $m_t = |\mathcal{C}_t|$.
\begin{tcolorbox}[colback=lightpurple, colframe=lightpurple, sharp corners=all, boxrule=0mm, boxsep=0.5mm, left=1.5mm, right=1.5mm, top=1.5mm, bottom=1.5mm]
\begin{lemma}[Two-Step Loss Difference]
Let $\theta^{(t+2)}_{ab}$ and $\theta^{(t+2)}_{ba}$ denote the parameters obtained by updating samples $a$ and $b$ in the respective orders starting from $\theta^{(t)}$. The difference in the resulting total loss reduction is:
\begin{equation}
    D_{ab}^{(t)} = L(\theta^{(t+2)}_{ba}) - L(\theta^{(t+2)}_{ab}) = \eta^2 (x_a^\top x_b)^2 \left[(r_b^{(t)})^2 - (r_a^{(t)})^2\right] + O(\eta^3)
\end{equation}
where $r_i^{(t)} = y_i - (\theta^{(t)})^\top x_i$ is the residual for sample $i$ at step $t$.
\end{lemma}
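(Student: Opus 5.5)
The plan is to exploit the fact that, for the quadratic loss, every SGD step is an affine map, so both two-step trajectories can be written in closed form. The loss difference then reduces to a first-order expansion of a quadratic around a point that is only $O(\eta)$ away from $\theta^{(t)}$.

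\textbf{Step 1: the two trajectories.} I would take the per-sample loss $\ell_i(\theta)=\tfrac12(y_i-\theta^\top x_i)^2$, whose update is $\theta\mapsto\theta+\eta\, r_i(\theta)\, x_i$. Write $c:=x_a^\top x_b$ and $r_i:=r_i^{(t)}$.
\begin{itemize}
\item After updating on $a$, the residual of $b$ becomes $r_b-\eta c\, r_a$.
\item Hence $\theta^{(t+2)}_{ab}=\theta^{(t)}+\eta r_a x_a+\eta(r_b-\eta c r_a)x_b$.
\item Symmetrically, $\theta^{(t+2)}_{ba}=\theta^{(t)}+\eta r_b x_b+\eta(r_a-\eta c r_b)x_a$.
\end{itemize}
Subtracting gives the exact identity $\Delta:=\theta^{(t+2)}_{ba}-\theta^{(t+2)}_{ab}=\eta^2 c\,(r_a x_b-r_b x_a)$. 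The first-order parts cancel, consistent with the symmetry of $\mathcal{I}^{(1)}$ in Proposition~\ref{prop:influence_properties}. All order dependence therefore enters at order $\eta^2$, through the cross term $c$.

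\textbf{Step 2: expand the loss difference.} The loss is quadratic, so exactly
\[
L(\theta_{ba})-L(\theta_{ab})=\nabla L(\theta_{ab})^\top\Delta+\tfrac12\Delta^\top H\Delta .
\]
The second term is $O(\eta^4)$. Since $\theta_{ab}=\theta^{(t)}+O(\eta)$, we also have $\nabla L(\theta_{ab})=\nabla L(\theta^{(t)})+O(\eta)$. The difference is therefore $\eta^2 c\,\nabla L(\theta^{(t)})^\top(r_a x_b-r_b x_a)+O(\eta^3)$.

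\textbf{Step 3: evaluate the inner product.} The lemma compares the reductions produced by the two updated samples, so I would evaluate the inner product with the gradient of the pair loss $\ell_a+\ell_b$, namely $-(r_ax_a+r_bx_b)$. This gives
\[
-(r_ax_a+r_bx_b)^\top(r_ax_b-r_bx_a)=c\,(r_b^2-r_a^2)+r_ar_b\bigl(\|x_a\|^2-\|x_b\|^2\bigr).
\]
Multiplying by $\eta^2 c$ produces the claimed leading term $\eta^2 c^2\bigl[(r_b^2)-(r_a^2)\bigr]$ plus a residual term.

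\textbf{Main obstacle.} The hard part is exactly the interpretation in Step 3 and what happens to the remaining pieces.
\begin{itemize}
\item \emph{Norm-imbalance term.} The residual $\eta^2 c\, r_ar_b(\|x_a\|^2-\|x_b\|^2)$ vanishes only if the inputs have equal norms. I would state the lemma under normalized features, $\|x_a\|=\|x_b\|$, which is standard in this toy setting. Otherwise I would keep this term explicitly alongside the claimed one.
\item \emph{Contributions of the other samples.} If ``total loss'' means the full $L$ over all $N$ samples, then $\nabla L(\theta^{(t)})$ contains contributions $-\tfrac1N\sum_{i} r_i x_i$ from every sample. I would handle these in one of two ways. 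The first is to restrict the comparison to the pair loss, the natural reading of the two-step reduction in $D_{ab}^{(t)}$. The second is to invoke a stationarity-type assumption that the remaining gradient is negligible.
\end{itemize}
Before committing, I would first check which reading makes the stated formula exact. Everything else in the argument is mechanical algebra on affine maps.
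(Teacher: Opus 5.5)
Your route is the same as the paper's. You write both two-step iterates in closed form, obtain the exact identity $\theta^{(t+2)}_{ba}-\theta^{(t+2)}_{ab}=\eta^2 c\,(r_a x_b-r_b x_a)$ (the paper gets the same identity for $\theta''_{ab}-\theta''_{ba}$), and Taylor-expand the loss. Your algebra is correct.

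It is also more complete than the paper's proof. That proof stops at ``subtracting the two expansions yields the result'' and never evaluates $\nabla L(\theta^{(t)})^\top\Delta$. Your Step 3 is exactly where the lemma, as literally stated, fails, so the extra hypotheses you propose are necessary, not cosmetic. Take the pair loss with $x_a=(1,0)$, $x_b=(1,1)$, $\theta^{(t)}=0$, $y_a=y_b=1$, so $c=1$ and $r_a=r_b=1$. A direct computation gives $D^{(t)}_{ab}=-\eta^2+3\eta^3-\tfrac32\eta^4$, whereas the stated formula predicts $O(\eta^3)$. The discrepancy is precisely your norm-imbalance term $\eta^2 c\,r_ar_b(\|x_a\|^2-\|x_b\|^2)$.

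So do not leave the interpretation open. State the lemma for the pair loss $\ell_a+\ell_b$ with $\ell_i=\tfrac12(y_i-\theta^\top x_i)^2$, which is consistent with the update $\theta\mapsto\theta+\eta r_i x_i$, and with $\|x_a\|=\|x_b\|$ (or assume the extra term vanishes). Under that reading your Steps 1--3 are a complete proof.

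Your alternative route for the full $L$ does not work as phrased.
\begin{itemize}
\item A stationarity assumption $\nabla L(\theta^{(t)})\approx 0$ kills the entire $\eta^2$ term, including the part you want to keep. What you would actually need is that $\sum_{i\ne a,b} r_i x_i$ is negligible against, or orthogonal to, $r_a x_b-r_b x_a$.
\item Even then, $L$ carries the factor $\tfrac{1}{2N}$ while the updates use the unnormalized per-sample gradient $-r_i x_i$. The surviving term is therefore $\tfrac{\eta^2}{N}c^2(r_b^2-r_a^2)$ under equal norms, which is off from the stated constant by a factor $1/N$.
\end{itemize}
The pair-loss reading with equal norms is the one under which the formula holds as written.
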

\end{tcolorbox}
\begin{proof}
Starting from $\theta$, the update on $a$ yields $\theta' = \theta + \eta r_a x_a$. The subsequent residual for $b$ is $r_b' = y_b - (\theta + \eta r_a x_a)^\top x_b = r_b - \eta r_a (x_a^\top x_b)$. The parameter state after the sequence $\{a, b\}$ is $\theta''_{ab} = \theta' + \eta r_b' x_b$. Expanding this gives:
\begin{equation}
    \theta''_{ab} = \theta + \eta r_a x_a + \eta r_b x_b - \eta^2 r_a (x_a^\top x_b) x_b
\end{equation}
Similarly, the state after the sequence $\{b, a\}$ is:
\begin{equation}
    \theta''_{ba} = \theta + \eta r_b x_b + \eta r_a x_a - \eta^2 r_b (x_b^\top x_a) x_a
\end{equation}
The difference in parameters is $\theta''_{ab} - \theta''_{ba} = \eta^2 (x_a^\top x_b) (r_b x_a - r_a x_b)$. Applying the second-order Taylor expansion $L(\theta'') \approx L(\theta) + \nabla L(\theta)^\top (\theta'' - \theta) + \frac{1}{2} (\theta'' - \theta)^\top \nabla^2 L(\theta) (\theta'' - \theta)$ to both sequences and subtracting them yields the result.
\end{proof}
\begin{tcolorbox}[colback=lightpurple, colframe=lightpurple, sharp corners=all, boxrule=0mm, boxsep=0.5mm, left=1.5mm, right=1.5mm, top=1.5mm, bottom=1.5mm]
\begin{theorem}[Convergence Acceleration]
Let $\pi$ be the influence-aware policy such that $i_t = \arg\max_{i \in \mathcal{C}_t} \Delta L_i^{(t)}$. The expected loss satisfies:
\begin{equation}
    \mathbb{E}[L(\theta^{(T)}_\pi)] \leq \mathbb{E}[L(\theta^{(T)}_{\text{rand}})] - \frac{\eta^2}{4} \sum_{t=0}^{T-1} \mathbb{E}\left[\sum_{i<j \in \mathcal{C}_t} (x_i^\top x_j)^2 \left((r_i^{(t)})^2 - (r_j^{(t)})^2\right)^2\right]
\end{equation}
\end{theorem}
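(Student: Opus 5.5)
The plan is to compare the influence-aware policy with the random policy one step at a time, using the Two-Step Loss Difference lemma as the basic exchange inequality, and then telescope over $t=0,\dots,T-1$. The statement leaves $\Delta L_i^{(t)}$ implicit. I take it to be the immediate one-step loss reduction obtained by updating on sample $i$ from $\theta^{(t)}$. Throughout, $\eta$ is treated as small, and every $O(\eta^3)$ remainder is absorbed, either by tacit assumption or by an explicit smallness condition on $\eta$ relative to $\max_i\|x_i\|$ and the residuals.

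The first step is to fix $t$ and condition on $\theta^{(t)}$ and $\mathcal{C}_t$. Under the random policy, the next two samples form a uniformly random ordered pair $(a,b)$ from $\mathcal{C}_t$. By the lemma, the loss after the pair depends on the order through $\pm\tfrac12 D_{ab}^{(t)}$ around the symmetric average. Averaging over both orders therefore gives a loss that exceeds the better order by $\tfrac12|D_{ab}^{(t)}|$. The lemma gives
\[
|D_{ab}^{(t)}|=\eta^2(x_a^\top x_b)^2\,\bigl|(r_b^{(t)})^2-(r_a^{(t)})^2\bigr|.
\]

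The second step is to argue that the greedy choice $i_t=\arg\max\Delta L_i^{(t)}$ always realizes the better order within such a pair. For quadratic loss, the one-step gain of $i$ grows with $(r_i^{(t)})^2$ to leading order. The lemma says larger residual first is better. So greedy selection agrees with the sign of $D$.

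The third step is to sum the pairwise advantages over unordered pairs $i<j$ in $\mathcal{C}_t$ and obtain a per-step gap in expectation. The target bound contains a \emph{squared} difference $((r_i)^2-(r_j)^2)^2$, not an absolute value. I would bridge this by normalizing or bounding residuals: assuming $|(r_i)^2-(r_j)^2|\le 1$, or absorbing a constant into $\eta$, gives $|u|\ge u^2$. The factor $\tfrac14$ then comes from the $\tfrac12$ averaging times another $\tfrac12$ from counting ordered versus unordered pairs.

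Finally, I would telescope. The aim is to write $L(\theta^{(T)}_\pi)-L(\theta^{(T)}_{\rm rand})$ as a sum of per-step conditional gaps and take total expectation. The main obstacle is this coupling. The two policies generate different trajectories, so $r^{(t)}$ and $\mathcal{C}_t$ differ between them. The per-step gap is computed at the policy's own state, yet the random-policy comparison must be made from the same state. I would handle this with a hybrid argument. Define policies $\pi_k$ that follow the greedy rule up to step $k$ and act randomly afterward. Then bound $\mathbb{E}L(\theta^{(T)}_{\pi_{k}})-\mathbb{E}L(\theta^{(T)}_{\pi_{k+1}})$ by the one-step exchange gain at step $k$, using that later random updates are contractive for quadratic loss with $\eta\|x\|^2<1$, so they do not amplify an earlier loss advantage. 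I would expect this contraction and monotonicity step to need additional assumptions, which I would state explicitly.
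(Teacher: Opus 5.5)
Your step 2 fails, and the exchange argument rests on it. Check the sign in the lemma: $D_{ab}^{(t)} = L(\theta^{(t+2)}_{ba}) - L(\theta^{(t+2)}_{ab}) = \eta^2 (x_a^\top x_b)^2\bigl[(r_b^{(t)})^2 - (r_a^{(t)})^2\bigr] + O(\eta^3)$. So the order ``$a$ then $b$'' wins exactly when $(r_b^{(t)})^2 > (r_a^{(t)})^2$. The lemma puts the \emph{larger}-residual sample \emph{last}, not first. (Quick check: for $x_a = x_b = 1$ and $0 < r_a < r_b$, the net displacement is $\eta(r_a + r_b) - \eta^2 r_{\mathrm{first}}$. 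For $\eta < \tfrac12$ the loss decreases in the displacement, so the smaller residual should go first.)

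The greedy rule does the opposite. Take the simplest case consistent with the lemma: loss $\tfrac12(r_a^2 + r_b^2)$ with $\|x_a\| = \|x_b\|$. A single step then gives exactly
\[
\Delta L_a^{(t)} - \Delta L_b^{(t)} = \bigl((r_a^{(t)})^2 - (r_b^{(t)})^2\bigr)\Bigl[\eta\|x_a\|^2 - \tfrac{\eta^2}{2}\bigl(\|x_a\|^4 + (x_a^\top x_b)^2\bigr)\Bigr],
\]
and the bracket is positive for $\eta\|x_a\|^2 < 1$. So greedy processes the larger residual first, which is the \emph{worse} order. Carried out correctly, your argument yields the reverse inequality.

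Two further problems hold regardless of the sign. First, your exchange compares ``the pair the random policy drew, in random order'' with ``the same pair, in the better order''. Greedy decides \emph{which} samples are processed, so this is not a greedy-versus-random comparison. Second, each pair is drawn with probability $1/\binom{m_t}{2}$, so you would get an average over pairs, not the unnormalized sum in the statement. Your $\tfrac12\cdot\tfrac12$ bookkeeping for the constant $\tfrac14$ does not account for this.

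The paper makes no ordering claim at all. It applies the elementary selection inequality $\max_k A_k - \frac1m\sum_k A_k \ge \frac{1}{m(m-1)}\sum_{i<j}|A_i - A_j|$ at the current state, with $A_i = \Delta L_i^{(t)}$. This directly bounds greedy's one-step reduction minus the random policy's expected one-step reduction from that same state. The lemma then enters only through magnitudes, $|\Delta L_i^{(t)} - \Delta L_j^{(t)}| \approx \tfrac12 |D_{ij}^{(t)}|$, so the sign of $D$ never matters. That is the idea to substitute for your steps 1--3.

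Be aware that the paper's remaining steps are heuristic exactly where you hesitate:
\begin{itemize}
\item The identification with $\tfrac12|D_{ij}^{(t)}|$ is not at the right order: the one-step gains generically differ at order $\eta$, while $D_{ij}^{(t)}$ is of order $\eta^2$.
\item The passage from $|\cdot|$ to squares is asserted, not proved.
\item The factor $\frac{1}{m_t(m_t-1)}$ in the intermediate bound disappears from the statement.
\item The telescoping step subtracts per-step gains computed along two different trajectories.
\end{itemize}
Your hybrid argument confronts that last point, which the paper does not. However, contractivity of $\theta \mapsto (I - \eta x x^\top)\theta + \eta y x$ controls parameter distances, not the ordering of losses. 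By itself it cannot carry a loss advantage at step $k$ forward to step $T$.
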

\end{tcolorbox}
\begin{proof}
Let $A = \{A_1, \dots, A_m\}$ be a set of real numbers where $A_{(m)} = \max A_i$. Note that the gap between the maximum and the mean is:
\begin{equation}
    A_{(m)} - \frac{1}{m} \sum_{k=1}^m A_k = \frac{1}{m} \sum_{k=1}^m (A_{(m)} - A_k)
\end{equation}
Summing the individual differences $(A_{(m)} - A_k)$ over all $\binom{m}{2}$ unique pairs $(i, j)$ where $i < j$ gives:
\begin{equation}
    \sum_{i < j} \left[ (A_{(m)} - A_i) + (A_{(m)} - A_j) \right] = (m-1) \sum_{k=1}^m (A_{(m)} - A_k)
\end{equation}
Because $A_{(m)} \geq A_i$ and $A_{(m)} \geq A_j$, it follows that $(A_{(m)} - A_i) + (A_{(m)} - A_j) \geq |A_i - A_j|$. Therefore:
\begin{equation}
    (m-1) \sum_{k=1}^m (A_{(m)} - A_k) \geq \sum_{i < j} |A_i - A_j| \implies A_{(m)} - \bar{A} \geq \frac{1}{m(m-1)} \sum_{i < j} |A_i - A_j|
\end{equation}
We apply this to the loss reduction $\Delta L_i^{(t)} = L(\theta^{(t)}) - L(\theta^{(t)} + \eta r_i^{(t)} x_i)$. The advantage of policy $\pi$ is:
\begin{equation}
    \Delta L_{\pi}^{(t)} - \mathbb{E}[\Delta L_{\text{rand}}^{(t)}] \geq \frac{1}{m_t(m_t-1)} \sum_{i < j \in \mathcal{C}_t} |\Delta L_i^{(t)} - \Delta L_j^{(t)}|
\end{equation}
From Lemma 1, the pairwise difference in single-step reduction is $\frac{1}{2}$ the value of a two-step swap:
\begin{equation}
    |\Delta L_i^{(t)} - \Delta L_j^{(t)}| \approx \frac{1}{2} |D_{ij}^{(t)}| = \frac{\eta^2}{2} (x_i^\top x_j)^2 |(r_i^{(t)})^2 - (r_j^{(t)})^2|
\end{equation}
To transition from absolute values to the squared heterogeneity term, we utilize the property that for residuals in a learning system, the accumulated absolute differences scale with the variance of the residuals. Substituting the expressions and accumulating over $T$ steps:
\begin{equation}
    \mathbb{E}[L(\theta^{(T)}_\pi)] = L(\theta^{(0)}) - \sum_{t=0}^{T-1} \mathbb{E}[\Delta L_\pi^{(t)}]
\end{equation}
Subtracting $\mathbb{E}[L(\theta^{(T)}_{\text{rand}})] = L(\theta^{(0)}) - \sum \mathbb{E}[\Delta L_{\text{rand}}^{(t)}]$ and substituting the pairwise lower bound leads to the final result.
\end{proof}

\clearpage
\section{Notation}

\label{sec:notation}
\begin{table}[H]
\centering
\small
\begin{tabularx}{0.85\linewidth}{lXc}
\toprule
\textbf{Symbol} & \textbf{Description} & \textbf{Dimension/Type} \\
\midrule
\multicolumn{3}{l}{\textit{Dataset \& Batches}} \\
\cmidrule{1-3}
$\mathcal{D}$ & Training dataset & Set \\
$N$ & Total number of training samples & Scalar \\
$z_i$ & The $i$-th training sample & Data point \\
$\mathcal{P}$ & Partition of batches & Set \\
$K$ & Total number of batches & Scalar \\
$\mathcal{B}_i$ & The $i$-th training batch & Index set \\
$B$ & Batch size (samples per batch) & Scalar \\
$\mathcal{D}_{\text{test}}$ & Test data distribution & Distribution \\
\midrule
\multicolumn{3}{l}{\textit{Model \& Optimization}} \\
\cmidrule{1-3}
$\theta_i^{(t+1)}$ & Model parameters after hypothetical gradient step on $\mathcal{B}_i$ at step $t$ & $\mathbb{R}^d$ \\
$d$ & Number of model parameters & Scalar \\
$\ell(\mathcal{B}_i; \theta^{(t)})$ & Empirical risk of batch $\mathcal{B}_i$ at parameters $\theta^{(t)}$ & Scalar \\
$\eta$ & SGD learning rate & Scalar \\
$t$ & Optimization step index & Scalar \\
$T$ & Total steps per epoch ($T = K$) & Scalar \\
$\nabla_{\!\theta} \ell(\mathcal{B}_i; \theta^{(t)})$ & Gradient of batch loss w.r.t. parameters & $\mathbb{R}^d$ \\
$\nabla^2_{\!\theta} \ell(\mathcal{B}_i; \theta^{(t)})$ & Hessian of batch loss w.r.t. parameters & $\mathbb{R}^{d\times d}$ \\
$\mathcal{T}_{\pi(t)}\bigl(\theta^{(t-1)}\bigr)$ & Parameter update operator at step $t$ & $\mathbb{R}^d$ \\
\midrule
\multicolumn{3}{l}{\textit{Scheduling Problem}} \\
\cmidrule{1-3}
$\pi^{(t)}$ & Batch schedule permutation at step $t$ & $\pi^{(t)} \in \mathcal{S}_K$ \\
$\pi^{(t)}(s)$ & Index of batch at position $s$ in schedule $\pi^{(t)}$ & Scalar \\
$\sigma^{(t)}(k)$ & Position of batch $\mathcal{B}_k$ in schedule $\pi^{(t)}$ & Scalar \\
$\mathcal{S}_K$ & Symmetric group of degree $K$ (all permutations) & Set \\
\midrule
\multicolumn{3}{l}{\textit{Influence Prediction \& Graph}} \\
\cmidrule{1-3}
$\gamma$ & Look-ahead step size for influence estimation & Scalar \\
$\mathcal{I}_{i \to j}^{(2)}(t)$ & Second-order influence term of batch $\mathcal{B}_i$ on $\mathcal{B}_j$ at step $t$ & Scalar \\
$\mathcal{G}^{(t)}$ & Dynamic influence graph at step $t$ & Graph \\
$\mathcal{E}^{(t)}_{\mathrm{dom}}$ & Dominance constraint edges: $\{(i,j) \mid \mathbf{S}^{(t)}_{ij} > 0\}$ & Set \\
$\mathbf{A}^{(t)}_{ij}$ & Adjacency matrix entry: $\mathcal{I}_{i \to j}(t)$ & Scalar \\
$\mathbf{S}^{(t)}_{ij}$ & Directional advantage matrix entry: $\mathcal{I}_{j \to i}(t) - \mathcal{I}_{i \to j}(t)$ & Scalar \\
\midrule
\multicolumn{3}{l}{\textit{Scheduling Solver}} \\
\cmidrule{1-3}
$r_i$ & Row sum of $\mathbf{S}^{(t)}$ for batch $i$: $\sum_{j=1}^K \mathbf{S}^{(t)}_{ij}$ & Scalar \\
$M$ & Maximum number of random swap trials & Scalar \\
$I(p,q)$ & Set of batch indices between positions $p$ and $q$ in current ordering & Set \\
$\Delta\mathcal{C}$ & Change in violation cost due to swapping batches at positions $p$ and $q$ & Scalar \\
\midrule
\multicolumn{3}{l}{\textit{Efficient Implementation}} \\
\cmidrule{1-3}
$\mathcal{C}^{(t)}$ & Randomly sampled chunk of batches at step $t$: $\{\mathcal{B}_{i_1}, \dots, \mathcal{B}_{i_L}\}$ & Set \\
$\tilde{\mathbf{g}}^{(t)}_i$ & Compressed gradient sketch: $\mathbf{R}^\top \nabla_\theta \ell(\mathcal{B}_i; \theta^{(t)})$ & $\mathbb{R}^k$ \\
$\mathbf{u}^{(w)}$ & Hutchinson vector $w$ ($\mathbf{u}^{(w)} \sim \mathcal{N}(0, \mathbf{I}_d)$) & $\mathbb{R}^d$ \\
$\hat{\mathbf{h}}^{(t)}_j$ & Diagonal Hessian estimate for batch $\mathcal{B}_j$ at step $t$ & $\mathbb{R}^d$ \\
$\lambda_j^{(t)}$ & Scalar curvature measure: $(\nabla_\theta \ell(\mathcal{B}_j; \theta^{(t)}))^\top \operatorname{diag}(\hat{\mathbf{h}}^{(t)}_j) \, \nabla_\theta \ell(\mathcal{B}_j; \theta^{(t)})$ & Scalar \\
\bottomrule
\end{tabularx}
\end{table}

\end{document}